\newtheorem{theorem}{\textbf{Theorem}}
\newtheorem{lemma}{\textbf{Lemma}}
\newtheorem{Def}{Definition}
\newtheorem{assumption}{Assumption}
\begin{document}
\bibliographystyle{IEEEtran}

\title{Predictive GAN-powered Multi-Objective Optimization for Hybrid Federated Split Learning  \thanks{B. Yin, Z. Chen and M. Tao are with the School of Electronic Information and Electrical Engineering, Shanghai Jiao Tong University, P. R. China. Email: \{yinbsh, zhiyongchen, mxtao\}@sjtu.edu.cn. (\emph{Corresponding author: Zhiyong Chen, Meixia Tao}).}}

\author{Benshun Yin, Zhiyong Chen and Meixia Tao, \emph{IEEE Fellow}}
\maketitle

\begin{abstract}
As an edge intelligence algorithm for multi-device collaborative training, federated learning (FL) can reduce the communication burden but increase the computing load of wireless devices. In contrast, split learning (SL) can reduce the computing load of devices by using model splitting and assignment, but increase the communication burden to transmit intermediate results. In this paper, to exploit the advantages of FL and SL, we propose a hybrid federated split learning (HFSL) framework in wireless networks, which combines the multi-worker parallel update of FL and flexible splitting of SL. To reduce the computational idleness in model splitting, we design a parallel computing scheme for model splitting without label sharing, and theoretically analyze the influence of the delayed gradient caused by the scheme on the convergence speed. Aiming to obtain the trade-off between the training time and energy consumption, we optimize the splitting decision, the bandwidth and computing resource allocation. The optimization problem is multi-objective, and we thus propose a predictive generative adversarial network (GAN)-powered multi-objective optimization algorithm to obtain the Pareto front of the problem. Experimental results show that the proposed algorithm outperforms others in finding Pareto optimal solutions, and the solutions of the proposed HFSL dominate the solution of FL.
\end{abstract}

\begin{IEEEkeywords}
Federated learning, split learning, parallel computing, generative adversarial network, multi-objective optimization
\end{IEEEkeywords}

\section{Introduction}
With the booming development of Internet of things (IoT), abundant data is produced by IoT devices every day \cite{zhu2020toward}. Based on the large amount of distributed data, edge machine learning algorithms are flourishing to realize intelligent applications in wireless networks \cite{wang2020convergence,zhang2019deep,Chen2019,zhou2019edge,zhu2020toward,zap2019,Deng2020,li2018learning,data_nas}. To enable multiple wireless devices to collaboratively train a machine learning algorithm with their local data, federated learning (FL) is proposed \cite{federated}. Compared with traditional centralized learning that transmits huge amounts of raw data to the cloud server for training, FL can effectively protect the privacy of data without exchanging the local data. Meanwhile, since only the global model is uploaded and downloaded in each round of FL training, FL generally reduce the communication load compared to transmitting raw data. However, the IoT devices, also called workers, are required to perform a local update of the training model with local computing power in FL. This can greatly increase the computation burden of the workers, especially for training deep neural networks with high computational complexity. When the computing power of workers is low, the training time of FL is greatly extended, which affects the practical application of FL. In addition, local updating entirely by the own computing power of workers increases their energy consumption.

Unlike FL, split learning (SL) can split a deep neural network into multiple parts and deliver them to different computational nodes  \cite{gupta2018distributed,vepakomma2018split,HiveMind,thapa2020splitfed}, such as workers and edge servers. Through multi-tier computing, the computation burden and energy consumption of a single node can be significantly reduced. However, the intermediate output or gradient of the neural network needs to be transmitted between different computational nodes in SL, yielding a huge communication burden. Besides, due to the data dependency in the forward and backward propagation of neural network training, the computation of the later node has to wait for the required data to be transmitted from the previous node. These two issues can prolong the training time of SL.

Through the above comparison, we can find that FL and SL are complementary in terms of the communication resource consumption of the learning system and computing power requirement of workers. FL requires few communication resources to transmit global models infrequently, but requires workers with powerful computing power to complete local updates.In contrast, SL does not require workers with strong computing power but requires lots of communication resources to frequently transmit intermediate results of neural networks. Inspired by this, if we reasonably combine FL and SL, a trade-off between the training time of the learning system and the energy consumption of workers can be achieved under limited communication and computing resources.

\subsection{Related Work}
For utilizing the multi-worker parallel update of FL and the low computational requirement for workers of SL to improve training speed, the combination of FL and SL has been considered in many works \cite{thapa2020splitfed,9652119,han2021accelerating,he2020group,tian2022fedbert,park2021federated}. In \cite{thapa2020splitfed}, all the workers split the neural network into the worker-side part and the server-side part. The server updates the parameters of the server-side part in parallel after receiving the intermediate output of the worker-side part from all users, while the parameters of the worker-side part need to be transmitted to the server for global averaging. For large-scale workers in a learning system, workers are divided into multiple groups in \cite{9652119}, where the server-side part of each group and each worker-side perform a global average respectively to obtain the global model. For reducing the frequent communication between workers and the server, \cite{han2021accelerating,he2020group} use an auxiliary light-weight neural network and a loss function to update the parameters of the worker-side part without the participation of the server-side part. FL and SL are combined to train a large natural language processing model in \cite{tian2022fedbert}, where the model can be split and distributed to multiple workers and then updated one by one for sequentially partitioned data. The combination of FL and SL is extended to multi-task learning in \cite{park2021federated} with the shared server-side part and task-specific worker-side part. However, existing works combining FL and SL take the same splitting decision for all workers, without optimizing the splitting decision based on heterogeneous computing and communication resources. 

Some recent works have realized parallel computing between different computational nodes through a reasonable arrangement of communication and computing \cite{pipedream,wang2020geryon,DynaComm,wang2021overlap} to reduce the waiting time in SL caused by the data dependency of neural networks. With the neural network split and assigned to different workers, workers can asynchronously train with multiple minibatches of data at the same time in \cite{pipedream} based on the careful arrangement of multiple computing flows, thereby reducing computational idleness. For accelerating training, the multiple computing flows in \cite{wang2020geryon} are assigned different priorities according to the urgency level of the parameter. A layer-wise communication scheduler is designed in \cite{DynaComm} to make parameter transmission and computation overlap as much as possible. Similarly, the computation and communication of neural network training are partitioned with a greedy algorithm to overlap gradient communication with backward computation and parameter communication with forward computation in \cite{wang2021overlap}. However, these parallel mechanisms are designed for split learning with label sharing \cite{vepakomma2018split}, and cannot be directly applied to the case without label sharing.

To obtain a trade-off solution of a multi-objective problem, a common method is to weight the multi-objective problem into a single-objective problem with variable parameters, which requires prior information about the objective preference. Besides, many classic multi-objective evolutionary algorithms (MOEAs) are proposed, such as multiobjective evolutionary algorithm based on decomposition (MOEA/D) \cite{zhang2007moea} and non-dominated sorting genetic algorithm III (NSGA-III) \cite{NSGAIII}. Moreover, numerous model-based MOEAs are proposed using the decision variable clustering \cite{zhang2016decision}, Gaussian process-based inverse modeling \cite{cheng2015multiobjective}, dominance relationship classification \cite{8281523}, Pareto rank learning \cite{6252865}, etc. In general, the performance of these model-based MOEAs degrades as decision variables increase due to the increase of computational complexity and data requirement \cite{MOGAN}. Therefore, the generative adversarial network (GAN) with strong learning ability is applied to multi-objective optimization for generating potential solutions \cite{MOGAN,wang2021manifold}. Furthermore, multi-objective reinforcement learning algorithms \cite{moppo} are proposed to deal with multi-objective optimization problems with sequential decision making.

\subsection{Contributions and Outline}
Motivated by the above, we propose a hybrid federated split learning (HFSL) framework in wireless networks, and then jointly optimize the model splitting decision and communication-computing resource allocation for the HFSL system to reduce the training time and energy consumption of workers. The main contributions of this paper are summarized as follows:

\begin{itemize}
\item We propose a hybrid federated split learning framework to combine the advantage of FL and SL. This framework allows workers to make different splitting decisions to adjust the communication and local computing burden according to their heterogeneous communication and computing conditions, resulting in better trade-offs between training time and energy consumption. 

\item We design a parallel computing scheme for model splitting without label sharing to reduce the computational idleness of workers. In the proposed scheme, the calculation of other minibatches is inserted into the idleness of calculating a minibatch of data to train two minibatches of data simultaneously, which also causes a delayed gradient in the update. We theoretically analyze the effect of the delayed gradient on the convergence rate. We also theoretically derive the number of global rounds required for the hybrid federated split learning system to achieve the desired performance.

\item We propose a predictive GAN-powered multi-objective optimization algorithm to obtain the set of Pareto-dominating splitting decision and resource allocation. Specifically, we design a method to find dominance pairs from the current solutions, which is a pair of solutions and one of them strictly dominating the other. With the dominance pairs, the discriminator is trained to learn features from the difference between the dominating solution and the dominated solution. Then the generator can be trained with the discriminator to predict solutions that dominates the current dominating solutions. Experimental results show that the Pareto front found by the proposed algorithm outperforms other algorithms.
\end{itemize}

The rest of this paper is organized as follows. The system model and the problem formulation are introduced in Section \ref{sys}. The convergence analysis of the hybrid federated split learning is presented in Section \ref{converge}. The detail of the proposed algorithm for multi-objective optimization is introduced in Sections \ref{alg}. Finally, extensive experimental results are presented in Section \ref{sim}, and conclusions are drawn in Section \ref{con}.

\section{System Model}
\label{sys}
We consider a hybrid federated split learning system as shown in Fig. \ref{model}, which consists of a base station (BS) with an edge server and a set of workers $\mathcal{K}=\{1,2,...,K\}$. Each worker $k\in \mathcal{K}$ has its local dataset $\{(\bm{x}_{k,1},y_{k,1}),...,(\bm{x}_{k,D_k},y_{k,D_k})\}$ with the size $D_k$, where $\bm{x}_{k,1},...,\bm{x}_{k,D_k}$ are the raw data and $y_{k,1},...,y_{k,D_k}$ refer to the corresponding label. The total amount of data for all workers is $D=\sum_{k\in \mathcal{K}}D_k$. By multi-worker collaborative learning, this system aims to find the optimal vector $\bm{w}^{\ast}$ that minimizes the global loss function
\begin{equation}
F(\bm{w})=\frac{\sum_{k\in \mathcal{K}}D_k F_k(\bm{w})}{D}=\frac{\sum_{k\in \mathcal{K}}\sum_{d=1}^{D_k}f(\bm{x}_{k,d},y_{k,d};\bm{w})}{D},
\end{equation}
where $F_k(\cdot)$ is the local loss function of worker $k$. $f(\cdot)$ is the task-specific loss function such as cross-entropy. 
\begin{figure}[t]
\centering
\includegraphics[width=8cm]{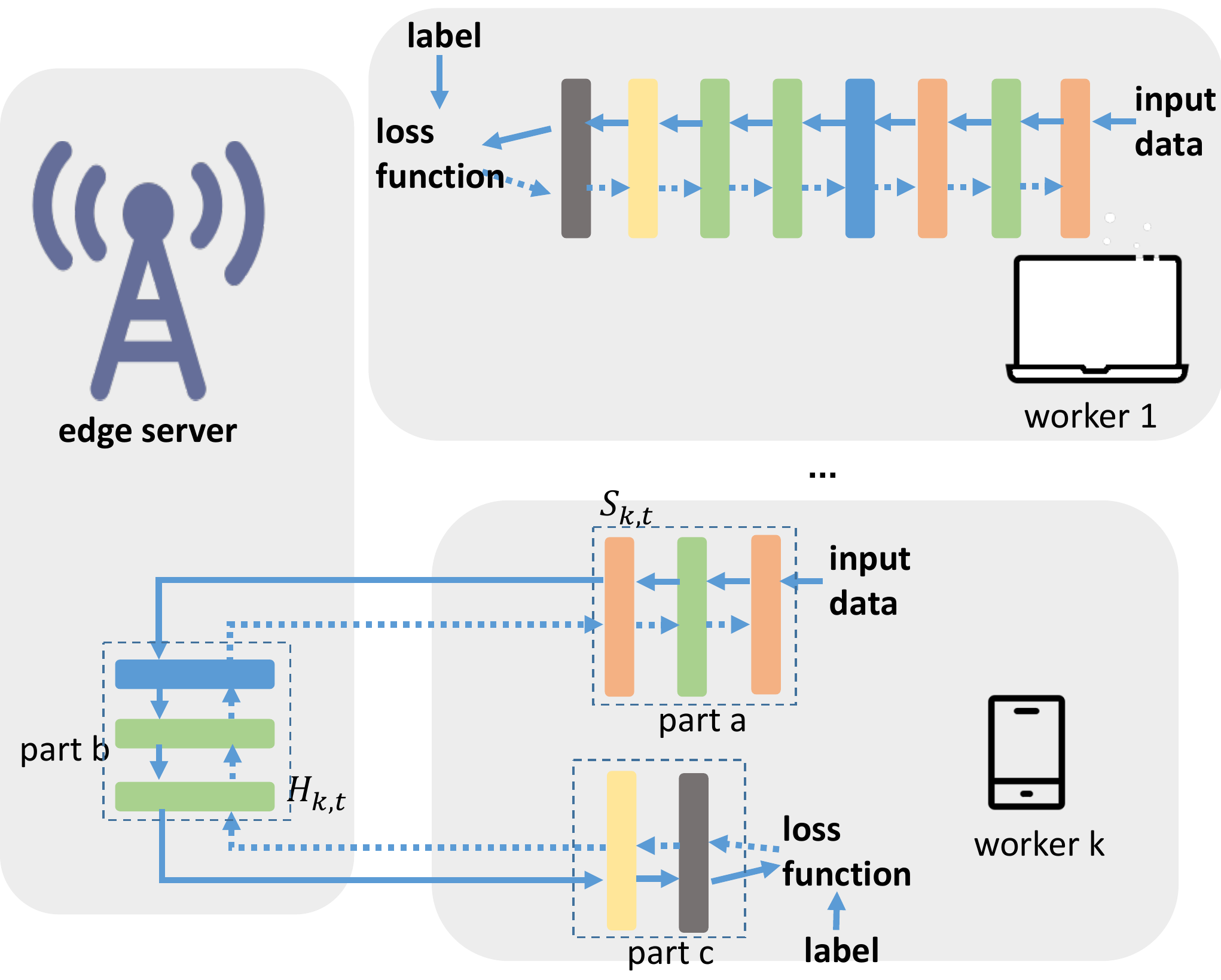}
\caption{A hybrid federated split learning system.}
\label{model}
\end{figure}

\subsection{Hybrid Federated Split Learning}
In the hybrid federated split learning system, if a worker has poor computing power or heavy computing load, the deep neural network trained on the worker can be split to offload part of the computation to the edge server. Specifically, the DNN is split into three parts, namely the input layer to the $S_{k}$-th layer, the ($S_{k}+1$)-th layer to the $H_{k}$-th layer, and the ($H_{k}+1$)-th layer to the output layer, which are denoted as \emph{part a}, \emph{part b} and \emph{part c} respectively. User privacy is protected by keeping the raw data and the corresponding label on the worker, so part a and part c are executed on the worker, while part b can be offloaded to the server. Let $S_{k}$ and $H_{k}$ be the split decision of the worker $k$. In particular, we have $S_{k}=H_{k}$ when the DNN is not split, such as the worker 1 in Fig. \ref{model}. 

In the hybrid federated split learning mechanism, the global aggregation of FL and the device-edge synergy of SL are combined. The execution flow of this mechanism is shown in Fig. \ref{slot}. Step 1, the splitting decision of each worker is made based on computing power and channel conditions before training starts and is fixed in all the global rounds. Step 2, denote the global aggregated model obtained in the last round as $\bm{W}_{t-1}$. The workers download the part of $\bm{W}_{t-1}$ that needs to be executed locally. The whole $\bm{W}_{t-1}$ is downloaded for workers without model splitting, while only the parameter of \emph{part a} and \emph{part c} are downloaded for workers with splitting. Step 3, after the worker $k$ receives the parameter, it performs $N_k$ iterations to locally update the model $\bm{W}_{t-1}$ by 
\begin{equation}
\label{grad}
\bm{w}^n_{k,t}=\bm{w}^{n-1}_{k,t}-\eta \nabla F_k (\bm{w}^{n-1}_{k,t}), n=1,2,...,N_k,
\end{equation}
where $\bm{w}^n_{k,t}$ is the local model obtained by worker $k$ after the $n$-th iteration of the $t$-th global round, and $\bm{w}^0_{k,t}=\bm{W}_{t-1}$. Step 4, after the worker $k$ completes $N_k$ iterations, it uploads the locally executed part of $\bm{w}^{N_k}_{k,t}$ to the server. Step 5, when the edge server receives all the locally updated model, it performs the global aggregation by
\begin{equation}
\label{average}
\bm{W}_{t}=\sum_{k \in \mathcal{K}} \frac{D_k}{D} \bm{w}^{N_k}_{k,t}.
\end{equation}
Steps 2 to 5 above are performed in each global round, and they are repeated for many times to obtain the desired learning performance.

Specifically, in the third step above, the workers without model splitting perform local update of the DNN entirely by their own computing resources, while the workers with splitting require the computing resources of the server. In the forward propagation of model splitting, the output feature of \emph{part a} needs to be uploaded to the server for the calculation of \emph{part b}, and the execution of \emph{part c} depends on the output of \emph{part b}. Similar data dependency exists in the backward propagation because the gradient calculation of the current layer requires the gradient of the previous layer based on the chain rule. Besides, as the backward propagation requires the output value of the hidden layer obtained by the forward propagation, the backward propagation of a minibatch of data can be performed after completing the forward propagation of this minibatch.
\begin{figure*}[t]
\centering
\includegraphics[width=16.5cm]{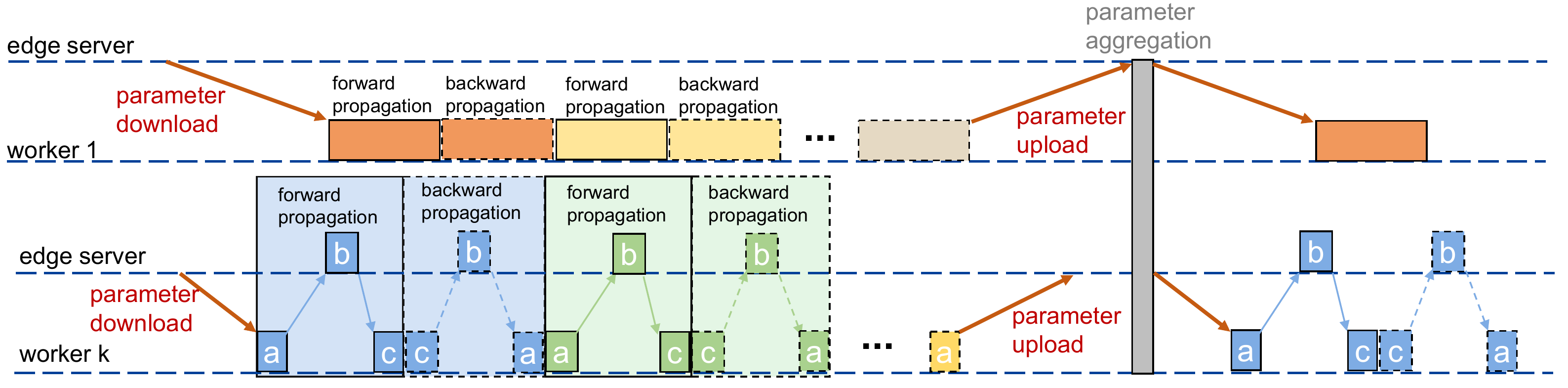}
\caption{Execution flow diagram of the hybrid federated split learning.}
\label{slot}
\end{figure*}

\subsection{Parallel Computing for Model Splitting}
\begin{figure*}[t]
\centering
\subfigure[]{
\includegraphics[width=14cm]{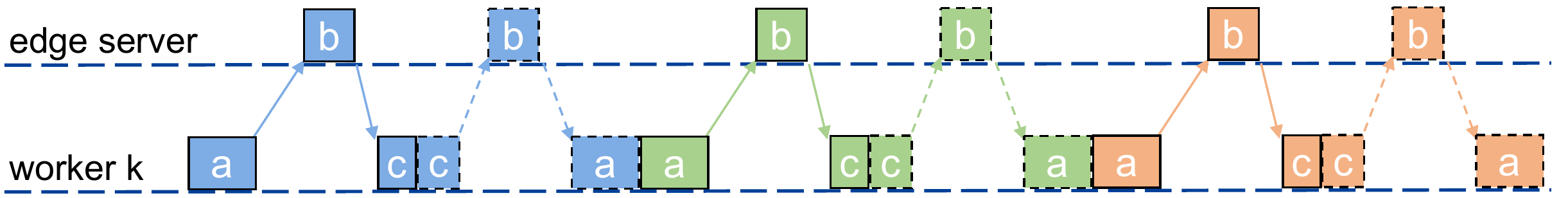}
}
\subfigure[]{
\includegraphics[width=14cm]{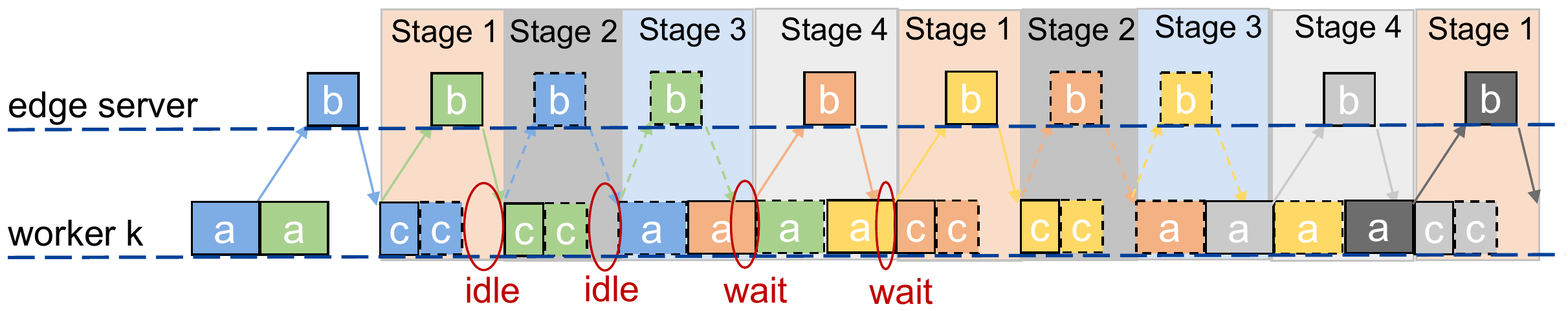}
}
\subfigure[]{
\includegraphics[width=14cm]{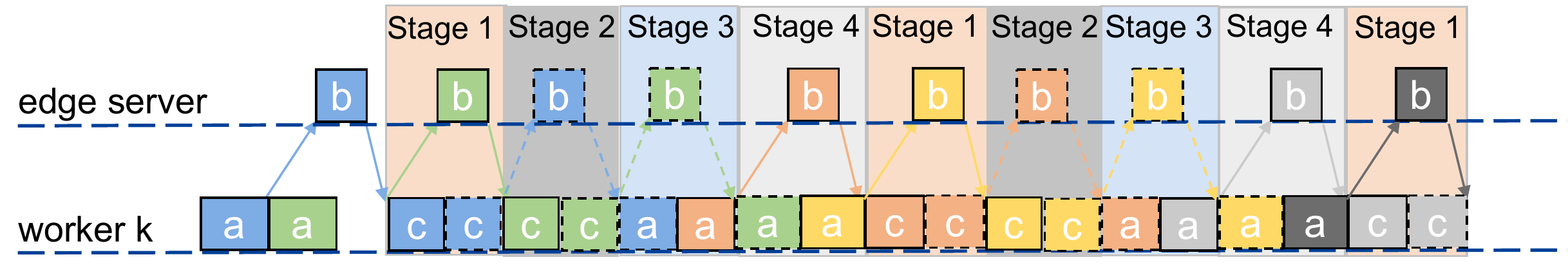}
}
\caption{(a) Execution flow of the vanilla model splitting. (b) Parallel computing process for model splitting. (c) Parallel computing process for model splitting with adaptive local computing frequency.}
\label{parallel}
\end{figure*}
The data dependency in the forward and backward propagation of DNN training leads to the worker idleness in vanilla model splitting as shown in Fig. \ref{parallel}(a). For example, during the period from the completion of \emph{part a} to the start of \emph{part c} in the forward propagation, the worker has to wait for uploading the output of \emph{part a}, the calculation of \emph{part b} by the server and downloading the output of \emph{part b}. This idleness causes the inefficient execution of model splitting. To alleviate this problem, we design a parallel computing mechanism for model splitting as shown in Fig. \ref{parallel} (b). In the vanilla model splitting, the worker cooperates with the server to perform sequential iterations, that is, the next minibatch of data is processed after the forward and backward propagation of the previous minibatch is completed. In the proposed parallel computing schemd, to improve the execution efficiency, when the forward propagation of \emph{part a} of the first minibatch is completed, the worker starts to calculate \emph{part a} of the second minibatch. When the output of \emph{part b} of the first minibatch is downloaded to the worker, the worker uploads the output of \emph{part a} of the second minibatch while starting to calculate the forward and backward propagation of \emph{part c} of the first minibatch. Subsequent iterations follow a similar arrangement, resulting in stage 1 to 4 in Fig. \ref{parallel} (b). In each stage, a forward and a backward propagation of \emph{part a} or \emph{part c} is calculated locally. Each stage starts with the uploading to the server, and ends with the completion of downloading from the server or local computing. By this method, the worker-side calculation of other minibatches is inserted into the idle time of the worker in the vanilla model splitting. 

Moreover, the mismatch between the time consumption of worker computing and edge computing also causes the worker to be idle or waiting. For example, in the stage 1 of Fig. \ref{parallel} (b), the time cost of uploading the output of \emph{part a}, calculating \emph{part b} and downloading the output of \emph{part b} is larger than that of the forward and backward propagation of \emph{part c}. In this case, the CPU frequency of the worker is too high, which leads to idleness and increases the energy consumption of the worker. So the CPU frequency of the worker can be reduced to save unnecessary energy consumption in this case. In another case, such as stage 3 of Fig. \ref{parallel} (b), the CPU frequency of the worker is too low, causing the worker to wait for the completion of the worker-side calculation to enter stage 4 after receiving the gradient from the server. So in this case, the CPU frequency of the worker can be increased to reduce the waiting time. Considering these two cases, we adopt an adaptive local computing frequency as shown in Fig. \ref{parallel} (c). 

Mathematically, let  $C^F_{l}$ and $C^B_{l}$ are the number of floating point operations (FLOPs) required by the $l$-th layer in the forward and backward propagation of processing each data respectively. $L$ is the total number of layers of the trained neural network. Denote the batch size of worker $k$ as $b_k$. The local computing frequency (in cycle/s) of stage 1 is adaptively set by
\begin{equation}
f_{k,t}^{1}=\frac{\sum_{l=H_k+1}^{L} b_k(C^F_{l}+C^B_{l})}{T_{k,t}^{1} n_k},
\end{equation}  
where $n_k$ is the number of FLOPs per cycle. The duration of stage 1 is given by 
\begin{equation}
\label{T_s1}
T_{k,t}^{1}=\max\big\{T_{k,t}^{UF}+T_{k,t}^{EF}+T_{k,t}^{DF} , \frac{\sum_{l=H_k+1}^{L} b_k(C^F_{l}+C^B_{l})}{f_{k}^{\max} n_k}\big\}.
\end{equation}
Here, $f_{k}^{\max}$ is the maximum CPU frequency of worker $k$. $T_{k,t}^{UF}$ and $T_{k,t}^{DF}$ denote the time consumption of uploading the output of \emph{part a} and downloading the output of \emph{part b} in forward propagation respectively, which are calculated by
\begin{align}
T_{k,t}^{UF}= \frac{b_k O^F_{S_k}}{B_k \log (1+\frac{p_k g^2_{k,t}}{B_k N_0})}, \quad T_{k,t}^{DF}= \frac{b_k O^F_{H_k}}{B_k \log (1+\frac{p_0 g^2_{k,t}}{B_k N_0})},
\end{align}
where $O^F_l$ denotes the size (in \emph{bit}) of the intermediate feature output by the $l$-th layer in the forward propagation of each data. $B_k$ is the bandwidth allocated to worker $k$. $p_k$ and $p_0$ refer to the transmit power of worker $k$ and the edge server respectively. $g_{k,t}$ is the channel gain between the worker $k$ to the server, which is assumed to constant in each global round. $N_0$ is the spectral density of the additive white Gaussian noise (AWGN).  In (\ref{T_s1}), $T_{k,t}^{EF}$ is the time cost of the forward propagation of \emph{part b} at the server, which is given by
\begin{equation}
T_{k,t}^{EF}=\frac{\sum_{l=S_k+1}^{H_k} b_k C^F_{l}}{f^E_k n^E},
\end{equation}
where $f^E_k$ is the computing frequency assigned to worker $k$ by the server, and $n^E$ is the number of FLOPs per cycle for the server.

Obviously, worker $k$ calculates with $f_{k}^{\max}$ to reduce wait time in stage 1 when $\frac{\sum_{l=H_k+1}^{L} b_k(C^F_{l}+C^B_{l})}{f_{k}^{\max} n_k}$ $\ge (T_{k,t}^{UF}+T_{k,t}^{EF}+T_{k,t}^{DF})$. Otherwise worker $k$ calculates in a lower frequency to save energy.

Similarly, the local computing frequency in stages 2, 3 and 4 can be obtained by $f_{k,t}^{2}=\frac{\sum_{l=H_k+1}^{L} b_k(C^F_{l}+C^B_{l})}{T_{k,t}^{2} n_k}$, $f_{k,t}^{3}=\frac{\sum_{l=1}^{S_k} b_k(C^F_{l}+C^B_{l})}{T_{k,t}^{3} n_k}$ and $f_{k,t}^{4}=\frac{\sum_{l=1}^{S_k} b_k(C^F_{l}+C^B_{l})}{T_{k,t}^{4} n_k}$, respectively. The duration of stages 2, 3 and 4 are $T_{k,t}^{2}=\max\{T_{k,t}^{UB}+T_{k,t}^{EB}+T_{k,t}^{DB} , \frac{\sum_{l=H_k+1}^{L} b_k(C^F_{l}+C^B_{l})}{f_{k}^{\max} n_k}\}$, $T_{k,t}^{3}=\max\{T_{k,t}^{UB}+T_{k,t}^{EB}+T_{k,t}^{DB}, \frac{\sum_{l=1}^{S_k} b_k(C^F_{l}+C^B_{l})}{f_{k}^{\max} n_k}\}$ and $T_{k,t}^{4}=\max\{T_{k,t}^{UF}+T_{k,t}^{EF}+T_{k,t}^{DF} , \frac{\sum_{l=1}^{S_k} b_k(C^F_{l}+C^B_{l})}{f_{k}^{\max} n_k}\}$, respectively. The time consumption of uploading the intermediate gradient output by \emph{part c} and downloading the gradient output by \emph{part b} in backward propagation are $T_{k,t}^{UB}=\frac{b_k O^B_{H_k+1}}{B_k \log (1+\frac{p_k g^2_{k,t}}{B_k N_0})}$ and $T_{k,t}^{DB}=\frac{b_k O^B_{S_k+1}}{B_k \log (1+\frac{p_0 g^2_{k,t}}{B_k N_0})}$, respectively. Here, $O^B_l$ denotes the size (in \emph{bit}) of the intermediate gradient output by the $l$-th layer in the backward propagation of each data. The time cost of the backward propagation of \emph{part b} at the server is $T_{k,t}^{EB}=\frac{\sum_{l=S_k+1}^{H_k} b_k C^B_{l}}{f^E_k n^E}$.

In the proposed parallel computing scheme, the calculation of the second minibatch starts before the finish of the first minibatch. Generally, the calculation of the $n$-th minibatch starts when the $(n-2)$-th minibatch is completed, without waiting for the $(n-1)$-th minibatch to complete. Therefore, the delayed gradient is used to update parameters in the parallel computing 
\begin{equation}
\label{delay_grad}
\hat{\bm{w}}^n_{k,t}=\hat{\bm{w}}^{n-1}_{k,t}-\eta \nabla F_k (\hat{\bm{w}}^{n-2}_{k,t}), n=1,2,...,N_k,
\end{equation}
where $\hat{\bm{w}}^{-1}_{k,t}=\hat{\bm{w}}^{0}_{k,t}=\bm{W}_{t-1}$. For example, the iteration of the first and the second minibatch in Fig. \ref{parallel} (c), i.e., the blue and the green flow, are based on the parameter $\hat{\bm{w}}^{-1}_{k,t}$ and $\hat{\bm{w}}^{0}_{k,t}$ respectively. When the first minibatch is completed, the third minibatch starts with the parameter $\hat{\bm{w}}^{1}_{k,t}=\hat{\bm{w}}^{0}_{k,t}-\eta \nabla F_k (\hat{\bm{w}}^{-1}_{k,t})$.

\subsection{Time and Energy Consumption}
Denote the parameter size (in \emph{bit}) of the $l$-th layer as $G_l$. The time consumption of downloading and uploading the parameters that need to be updated locally are respectively given by
\begin{align}
T_{k,t}^{ParD}= \frac{\sum_{l=1}^{S_k}G_l+\sum_{l=H_k+1}^L G_l}{B_k \log (1+\frac{p_0 g^2_{k,t}}{B_k N_0})}, \quad
T_{k,t}^{ParU}= \frac{\sum_{l=1}^{S_k}G_l+\sum_{l=H_k+1}^L G_l}{B_k \log (1+\frac{p_k g^2_{k,t}}{B_k N_0})}.
\end{align}

The number of local iterations is $N_k=\lceil \frac{e_k D_k}{b_k} \rceil$, where $e_k$ is the number of local training epochs and $\lceil \cdot \rceil$ is the ceiling function. Supposing that $N_k$ is even\footnote{We can adjust $e_k$ and $b_k$ to make $N_k$ be even.}, the stage 1 and stage 2 are repeated $N_k/2$ times in the parallel computing of model splitting, while stage 3 and stage 4 are repeated $N_k/2-1$ times. Since there are usually dozens of iterations in each global round, the time consumption of local iteration in model splitting is mainly in the repeated stages. For simplicity, we approximate the calculation of entering and exiting the repeated stages as stage 4 and stage 3, respectively. So when worker $k$ iterates with model splitting in the $t$-th global round, its time consumption can be approximated by
\begin{equation}
T_{k,t}^{sp}=\frac{N_k}{2} \sum_{s=1}^4 T_{k,t}^s +T_{k,t}^{ParD}+T_{k,t}^{ParU}.
\end{equation}
Its energy consumption can be approximated as follows \cite{burd1996processor}
\begin{equation}
E_{k,t}^{sp}=\frac{N_k}{2} \big (\sum_{s=1}^4 \epsilon_k(f_{k,t}^s)^3 T_{k,t}^s +2p_k(T_{k,t}^{UF}+T_{k,t}^{UB}) \big )+ p_k T_{k,t}^{ParU},
\end{equation}
where $\epsilon_k$ is the effective capacitance coefficient of worker $k$'s computing chip.

For the workers to iterate locally without model splitting, their local computing frequency is adaptively set to match the time consumption of other workers, thereby reducing waiting time and wasted energy. Specifically, the local computing frequency of workers without splitting is
\begin{equation}
\label{f_nsp}
f_{k,t}^{nsp}=\frac{e_k D_k\sum_{l=1}^{L} (C^F_{l}+C^B_{l})}{(T_{t}^{\max}-T_{k,t}^{ParD}-T_{k,t}^{ParU}) n_k},
\end{equation}
where $T_{t}^{\max}$ is the time consumption of the $t$-th global round, that is, the maximum time required for the workers to complete parameter download, local update and parameter upload. $T_{t}^{\max}$ is
\begin{align}
\label{T_max}
T_{t}^{\max}=\max \{ (1-I_k) T_{k,t}^{sp} +I_k\big (&\frac{e_k D_k\sum_{l=1}^{L} (C^F_{l}+C^B_{l})}{f_k^{\max} n_k}+T_{k,t}^{ParD} 
+T_{k,t}^{ParU} \big )\}_{k \in \mathcal{K}},
\end{align}
where $I_k$ is the indicator for model splitting, which is defined as
\begin{equation}
I_k=\left\{
\begin{aligned}
1, ~~& if~ S_k=H_k \\
0, ~~& if~ S_k \neq H_k. 
\end{aligned}
\right.
\end{equation}
It can be seen from (\ref{T_max}) that $T_{t}^{\max}$ is either the maximum time cost among workers without splitting at the maximum local frequency or the time cost of the slowest worker with splitting. 

Based on the local computing frequency set by (\ref{f_nsp}), the energy consumption of workers without splitting can be given by
\begin{equation}
E_{k,t}^{nsp}= \epsilon_k(f_{k,t}^{nsp})^3 (T_{t}^{\max}-T_{k,t}^{ParD}-T_{k,t}^{ParU})+ p_k T_{k,t}^{ParU}.
\end{equation}
The total energy consumption of all workers in $t$-th global round is given by
\begin{equation}
E_t^{sum}=\sum_{k \in \mathcal{K}} \big ( (1-I_k) E_{k,t}^{sp} +I_k E_{k,t}^{nsp}\big ).
\end{equation}

\subsection{Problem Formulation}
By optimizing the model splitting decision and allocation of bandwidth and server computing resource, the time consumption of the whole training process and the energy consumption of the workers can minimized. Denote $\bm{\varphi}\triangleq [S_1,H_1,f_1^E, B_1,...,S_K,H_K,f_K^E, B_K]$ be the optimization variables. Therefore, the optimization problem is formulated as
\begin{align}
\min_{\bm{\varphi}}~~ & V_1(\bm{\varphi})=\sum_{t=1}^{\tau} T_t^{\max} \nonumber \\
\min_{\bm{\varphi}}~~ & V_2(\bm{\varphi})=\sum_{t=1}^{\tau} E_t^{sum} \nonumber \\
s.t. ~~~~ & \sum_{k\in \mathcal{K}}(1-I_k)f_k^E \le f^{E,\max}, \label{compute}\\
& \sum_{k\in \mathcal{K}}B_k \le B^{\max}, \label{band}\\
& 1 \leq S_k \le H_k < L, \forall k \in \mathcal{K}, S_k, H_k \in \mathbb{Z}, \label{decision}
\end{align}
where $\tau$ is the number of global rounds required to achieve the desired performance.  For convenience, we denote $\bm{V}(\bm{\varphi})=[V_1(\bm{\varphi}),V_2(\bm{\varphi})]$. The constraints (\ref{compute}) and (\ref{band}) denote the feasible regions of allocated server computing frequency and bandwidth, respectively. The constraint (\ref{decision}) indicates that the input and output layer should be kept on the worker to protect privacy. Besides, the decision $S_k$ and $H_k$ are integers. 

We consider the optimization variables $\bm{\varphi}$ are the same in each global round. This is because optimizing these variables in each round requires full knowledge of the channel conditions in each round, which is difficult in the practical system. In this paper, we consider that the server has the large-scale fading coefficients of all workers and the channel gain changes independently and identically (i.i.d.) over rounds. 

The optimization problem is multi-objective, which is based on the trade-off between time and energy consumption. Increasing the number of workers with model splitting can reduce the energy consumption of the splitting workers, but it can also reduce the computing power allocated by the server to each worker and increase the training time. Thus, we can try to find as many Pareto optimal solutions as possible, instead of a single optimal policy. Meanwhile, the optimization problem is non-convex, and it is generally hard to obtain the set of Pareto-dominating solutions for such a problem. Therefore, a predictive GAN powered multi-objective optimization algorithm is proposed to solve the formulated problem in the following sections.

\section{Convergence Analysis of the proposed HFSL}
\label{converge}
In this section, we first analyze the convergence rate of the proposed HFSL because the delayed gradient update may affect the convergence rate. Although the existing convergence analyses \cite{zhu2021delayed,2014delayed} consider the delayed gradient, the cause of their delayed gradient is different from this paper. In \cite{zhu2021delayed}, workers start the next round of local updates at the same time as global averaging, causing a delay between the arrival of the averaged gradient to workers and the local update. In \cite{2014delayed}, gradients uploaded by workers to the server can be delayed due to asynchronous parameter aggregation. In this paper, the delayed gradient is due to the parallelism between different minibatches of data. Another difference is that we consider the heterogeneous workers, and the delayed gradient only exists on the workers with splitting. We then theoretically derive the number of global rounds required to reach the desired performance in this section.

\subsection{Assumptions}
\begin{assumption}
($L$-smoothness) The local loss function $F_k (\cdot)$ is $L$-smoothness with $L>0$, i.e.,
\begin{equation}
F_k (\bm{w}_2)-F_k (\bm{w}_1) \le  \left\langle \nabla F_k (\bm{w}_1) , \bm{w}_2-\bm{w}_1 \right\rangle +\frac{L}{2} \| \bm{w}_2-\bm{w}_1 \|^2, \forall \bm{w}_1, \bm{w}_2.
\end{equation}
\end{assumption}

\begin{assumption}
($\mu$-strongly convex) The global loss function $F (\cdot)$ is $\mu$-strongly convex with $\mu>0$, i.e., 
\begin{equation}
F (\bm{w}_2) \geq F (\bm{w}_1) + \left\langle \nabla F (\bm{w}_1) , \bm{w}_2-\bm{w}_1 \right\rangle +\frac{\mu}{2} \| \bm{w}_2-\bm{w}_1 \|^2, \forall \bm{w}_1, \bm{w}_2.
\label{convex}
\end{equation}
\end{assumption}

\begin{assumption}
(Bounded gradients) There exits a constant bound $G>0$ on the second moment of the gradients:
\begin{equation}
\mathbb{E} \| F_k (\bm{w}) \|^2 \leq G^2, ~~\forall \bm{w},\forall k.
\end{equation}
\end{assumption}

\begin{assumption}
(Global optimal) There exits a vector $\bm{w}^{\ast}$ that minimizes the global loss function, i.e., $\nabla F(\bm{w}^{\ast})=0$.
\end{assumption}

\subsection{Analysis of Convergence Bound}
Using the parallel computing mechanism, workers with splitting iterate with (\ref{delay_grad}), while workers without splitting use (\ref{grad}). So we modify the global aggregation (\ref{average}) as follows 
\begin{equation}
\bm{W}_{t}=\sum_{k \in \mathcal{K}} \frac{D_k}{D} \big ( I_k \bm{w}^{N_k}_{k,t} + (1-I_k) \hat{\bm{w}}^{N_k}_{k,t} \big ).
\end{equation}
We define the average parameter of all workers at the $n$-th iteration of the $t$-th global round as 
\begin{equation}
\overline{\bm{w}}^{n}_{t}=\sum_{k \in \mathcal{K}} \frac{D_k}{D} \big ( I_k \bm{w}^{n}_{k,t} + (1-I_k) \hat{\bm{w}}^{n}_{k,t} \big ), 
\end{equation}
where $n=1,2,...,N^{\max}$, and ${N^{\max}}=\max\{N_k\}_{k\in \mathcal{K}}$ is the maximum number of iterations among workers. To avoid confusion, we set $\hat{\bm{w}}^{n}_{k,t}=\hat{\bm{w}}^{N_k}_{k,t}$ and $\bm{w}^{n}_{k,t}=\bm{w}^{N_k}_{k,t}$ when $n\geq N_k$. Then we have a key lemma about $\overline{\bm{w}}^{n}_{t}$ as follows. 
\begin{lemma}
The difference between the parameter of worker $k$ and the average parameter across all workers is bounded as follows
\begin{align}
\mathbb{E}\big[\| \overline{\bm{w}}_t^n-\bm{w}^n_{k,t}\|^2 \big ] &\leq 4\eta^2 n^2 G^2, \\
\mathbb{E}\big[\| \overline{\bm{w}}_t^n-\hat{\bm{w}}^{n-1}_{k,t} \|^2 \big ] &\leq 2\eta^2 (n^2 +(n-1)^2 )G^2.
\end{align}
\end{lemma}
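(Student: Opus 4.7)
The plan is to exploit the fact that at $n=0$ every worker's iterate coincides with $\bm{W}_{t-1}$, so all of $\overline{\bm{w}}_t^n$, $\bm{w}^n_{k,t}$ and $\hat{\bm{w}}^{n-1}_{k,t}$ differ from $\bm{W}_{t-1}$ only by a short sum of gradient steps, each of whose norm is bounded by $G$ under Assumption~3. I would pivot about $\bm{W}_{t-1}$, i.e., write
$$\overline{\bm{w}}_t^n-\bm{w}^n_{k,t}=\bigl(\overline{\bm{w}}_t^n-\bm{W}_{t-1}\bigr)-\bigl(\bm{w}^n_{k,t}-\bm{W}_{t-1}\bigr),$$
apply the elementary inequality $\|a-b\|^2\le 2\|a\|^2+2\|b\|^2$, and bound each piece separately. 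The same decomposition with $\hat{\bm{w}}^{n-1}_{k,t}$ in place of $\bm{w}^n_{k,t}$ handles the second inequality.

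Next I would unroll the recursions (\ref{grad}) and (\ref{delay_grad}) using $\bm{w}^0_{k,t}=\hat{\bm{w}}^0_{k,t}=\hat{\bm{w}}^{-1}_{k,t}=\bm{W}_{t-1}$, giving
$$\bm{w}^n_{k,t}-\bm{W}_{t-1}=-\eta\sum_{i=0}^{n-1}\nabla F_k(\bm{w}^{i}_{k,t}),\qquad \hat{\bm{w}}^{n}_{k,t}-\bm{W}_{t-1}=-\eta\sum_{i=0}^{n-1}\nabla F_k(\hat{\bm{w}}^{i-1}_{k,t}),$$
and taking the convex combination defining $\overline{\bm{w}}_t^n$ expresses $\overline{\bm{w}}_t^n-\bm{W}_{t-1}$ as an $\eta$-scaled length-$n$ sum whose summands are themselves convex combinations of gradients across workers. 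Two standard inequalities then finish the job: Cauchy--Schwarz in the form $\|\sum_{i=0}^{n-1}x_i\|^2\le n\sum_{i=0}^{n-1}\|x_i\|^2$ extracts a factor of $n$, and Jensen's inequality pushes the squared norm inside the $k'$-average. Every remaining term is of the form $\mathbb{E}\|\nabla F_{k'}(\cdot)\|^2$, which Assumption~3 bounds by $G^2$. This yields $\mathbb{E}\|\bm{w}^n_{k,t}-\bm{W}_{t-1}\|^2\le \eta^2 n^2 G^2$ and $\mathbb{E}\|\overline{\bm{w}}_t^n-\bm{W}_{t-1}\|^2\le \eta^2 n^2 G^2$, which combine to the first bound $4\eta^2 n^2 G^2$. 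The second bound is identical except that $\hat{\bm{w}}^{n-1}_{k,t}-\bm{W}_{t-1}$ is a sum of only $n-1$ delayed gradients, so $\mathbb{E}\|\hat{\bm{w}}^{n-1}_{k,t}-\bm{W}_{t-1}\|^2\le \eta^2(n-1)^2G^2$, and adding $2\eta^2 n^2 G^2$ from the average side gives $2\eta^2(n^2+(n-1)^2)G^2$.

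I do not anticipate a real obstacle: the argument is purely algebraic once the pivot at $\bm{W}_{t-1}$ is fixed. The only bookkeeping subtleties are the index shift $i\mapsto i-1$ inside the delayed recursion, and the convention $\bm{w}^n_{k,t}=\bm{w}^{N_k}_{k,t}$, $\hat{\bm{w}}^n_{k,t}=\hat{\bm{w}}^{N_k}_{k,t}$ for $n\ge N_k$. In the latter case the summed gradient series is simply truncated at $N_k$ (equivalently, the extra summands are zero), which only strengthens each bound, so the stated inequalities continue to hold verbatim.
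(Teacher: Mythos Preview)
Your proposal is correct and follows essentially the same approach as the paper: both arguments unroll the recursions back to the common start $\bm{W}_{t-1}$, split the difference via $\|a-b\|^2\le 2\|a\|^2+2\|b\|^2$, apply the Cauchy--Schwarz bound $\|\sum_{i=1}^n x_i\|^2\le n\sum\|x_i\|^2$ to extract the factor $n$, push the squared norm inside the $k$-average by Jensen, and finish with Assumption~3. Your explicit handling of the $n\ge N_k$ truncation is a minor refinement the paper leaves implicit.
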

\begin{proof}
Please refer to Appendix \ref{lemma1}.
\end{proof}

\begin{theorem}
If the learning rate $\eta \leq \frac{1}{L}$, the global model $\bm{W}_{\tau}$ satisfies the following inequality after $\tau$ global rounds
\begin{align}
\mathbb{E}[F(\bm{W}_{\tau})] - F(\bm{w}^{\ast})& \leq  \rho^{{N^{\max}}\tau} \mathbb{E}\big [F(\bm{W}_{0})- F(\bm{w}^{\ast}) \big ] +\hat{\alpha},
\end{align}
where $\rho=1-\mu \eta \in (0,1)$ is the convergence rate. $\hat{\alpha}=\sum_{t=0}^{\tau-1} (1-\mu \eta)^{N^{\max}t} \sum_{n=0}^{{N^{\max}}-1}(1-\mu \eta)^{n} \alpha ({N^{\max}}-n)$ and $\alpha(n)= \eta^3 G^2 L^2 \sum_{k=1}^K \frac{D_k}{D}\big ( 2 I_k (n-1)^2 +(1-I_k)((n-1)^2+(n-2)^2) \big )$.
\end{theorem}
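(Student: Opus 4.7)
The strategy is the standard one for FedAvg-style convergence, adapted to cope with two simultaneous perturbations of the virtual averaged iterate $\overline{\bm{w}}_t^{n}$: the usual client-drift (because each worker performs its own SGD steps on $\overline{\bm{w}}_t^{n}$ rather than on the average) and the one-step staleness $\hat{\bm{w}}_{k,t}^{n-1}$ incurred by workers that use model splitting. Lemma~1 already gives the quantitative control of both drifts, so the job of the proof is to feed those bounds into an $L$-smooth/$\mu$-strongly-convex one-step descent argument, extract a contraction factor $\rho=1-\mu\eta$, and then unroll the resulting recursion across all $N^{\max}$ local iterations and all $\tau$ global rounds.

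First I would write out the virtual update
\begin{equation}
\overline{\bm{w}}_t^{n+1}-\overline{\bm{w}}_t^{n}=-\eta\sum_{k\in\mathcal{K}}\frac{D_k}{D}\Bigl(I_k\nabla F_k(\bm{w}_{k,t}^{n})+(1-I_k)\nabla F_k(\hat{\bm{w}}_{k,t}^{n-1})\Bigr),
\end{equation}
apply $L$-smoothness of $F$ to obtain
\begin{equation}
F(\overline{\bm{w}}_t^{n+1})\le F(\overline{\bm{w}}_t^{n})+\langle\nabla F(\overline{\bm{w}}_t^{n}),\overline{\bm{w}}_t^{n+1}-\overline{\bm{w}}_t^{n}\rangle+\tfrac{L}{2}\|\overline{\bm{w}}_t^{n+1}-\overline{\bm{w}}_t^{n}\|^2,
\end{equation}
and then split the inner product by adding and subtracting $\nabla F_k(\overline{\bm{w}}_t^{n})$ inside each $k$-term. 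The diagonal pieces assemble into $-\eta\|\nabla F(\overline{\bm{w}}_t^{n})\|^2$, while the cross pieces are controlled by Cauchy--Schwarz together with the $L$-smoothness bound $\|\nabla F_k(\bm{u})-\nabla F_k(\bm{v})\|\le L\|\bm{u}-\bm{v}\|$, so Lemma~1 converts them into explicit $\eta^3 L^2 G^2$ terms weighted by the $I_k$-dependent coefficients that appear in the definition of $\alpha(\cdot)$. The quadratic term $\tfrac{L}{2}\|\overline{\bm{w}}_t^{n+1}-\overline{\bm{w}}_t^{n}\|^2$ is bounded by $\tfrac{\eta^2 L}{2}G^2$ using Assumption~3, and under $\eta\le 1/L$ it can be absorbed into the $-\eta\|\nabla F(\overline{\bm{w}}_t^{n})\|^2$ term up to a constant factor.

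Next I would invoke $\mu$-strong convexity (Assumption~2), which at the minimizer $\bm{w}^{\ast}$ yields $\|\nabla F(\overline{\bm{w}}_t^{n})\|^2\ge 2\mu(F(\overline{\bm{w}}_t^{n})-F(\bm{w}^{\ast}))$, to turn the remaining $-\eta\|\nabla F(\overline{\bm{w}}_t^{n})\|^2$ into a geometric contraction on the suboptimality gap. Subtracting $F(\bm{w}^{\ast})$ from both sides and taking expectations produces the one-step recursion
\begin{equation}
\mathbb{E}[F(\overline{\bm{w}}_t^{n+1})-F(\bm{w}^{\ast})]\le (1-\mu\eta)\,\mathbb{E}[F(\overline{\bm{w}}_t^{n})-F(\bm{w}^{\ast})]+\alpha({N^{\max}}-n),
\end{equation}
where the additive term matches the $\alpha(\cdot)$ defined in the theorem; the asymmetric $2I_k(n-1)^2$ vs.\ $(1-I_k)((n-1)^2+(n-2)^2)$ split comes directly from the two cases of Lemma~1 (plain client-drift for non-splitting workers vs.\ staleness-plus-drift for splitting workers).

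Finally I would unroll this recursion: inside a single global round, iterating $n=0,\dots,N^{\max}-1$ gives an inner geometric sum $\sum_{n=0}^{N^{\max}-1}(1-\mu\eta)^{n}\alpha(N^{\max}-n)$ plus the contraction factor $(1-\mu\eta)^{N^{\max}}$ on the initial gap; since the global aggregation step coincides with $\overline{\bm{w}}_t^{N^{\max}}=\bm{W}_t$, unrolling again over $t=0,\dots,\tau-1$ yields the stated bound with $\rho^{N^{\max}\tau}=(1-\mu\eta)^{N^{\max}\tau}$ and exactly the double sum for $\hat{\alpha}$. The main technical obstacle I anticipate is the bookkeeping in the cross-term step: there are two different perturbed arguments per worker ($\bm{w}_{k,t}^{n}$ and $\hat{\bm{w}}_{k,t}^{n-1}$), each requiring a separate application of Lemma~1, and the $(n-1)^2$ versus $(n-2)^2$ accounting has to line up precisely to reproduce the $\alpha({N^{\max}}-n)$ expression; getting the constants to match (in particular that $\eta\le 1/L$ is exactly what is needed to absorb the smoothness quadratic term) is where any slip would occur.
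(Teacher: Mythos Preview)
Your overall strategy is the same as the paper's, and the cross-term bookkeeping you describe (add/subtract $\nabla F_k(\overline{\bm w}_t^{n})$, then $L$-smoothness plus Lemma~1) reproduces exactly the additive term $\alpha(\cdot)$. The gap is in your handling of the quadratic smoothness term. You bound $\tfrac{L}{2}\|\overline{\bm w}_t^{n+1}-\overline{\bm w}_t^{n}\|^2\le\tfrac{L\eta^2}{2}G^2$ via Assumption~3 and then assert that under $\eta\le 1/L$ this ``can be absorbed into the $-\eta\|\nabla F(\overline{\bm w}_t^{n})\|^2$ term up to a constant factor.'' That absorption does not work: $\tfrac{L\eta^2}{2}G^2$ is a fixed additive constant of order $\eta^2$, whereas $\|\nabla F(\overline{\bm w}_t^{n})\|^2$ is a variable quantity that vanishes near $\bm w^\ast$, so there is no way to cancel one against the other. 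If you simply leave $\tfrac{L\eta^2}{2}G^2$ as an extra additive term you get a valid bound, but with $\alpha(n)$ replaced by $\alpha(n)+\tfrac{L\eta^2}{2}G^2$, which is \emph{not} the $\alpha(n)$ in the theorem (the stated $\alpha(n)$ is $O(\eta^3)$, and your extra piece is $O(\eta^2)$).

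The paper avoids this by treating the inner product with the polarization identity $\langle a,b\rangle=\tfrac12(\|a\|^2+\|b\|^2-\|a-b\|^2)$ rather than add/subtract plus Cauchy--Schwarz. Writing $g=\sum_k\tfrac{D_k}{D}\bigl(I_k\nabla F_k(\bm w_{k,t}^{n-1})+(1-I_k)\nabla F_k(\hat{\bm w}_{k,t}^{n-2})\bigr)$, polarization yields
\[
-\eta\langle\nabla F(\overline{\bm w}_t^{n-1}),g\rangle=\tfrac{\eta}{2}\|\nabla F(\overline{\bm w}_t^{n-1})-g\|^2-\tfrac{\eta}{2}\|\nabla F(\overline{\bm w}_t^{n-1})\|^2-\tfrac{\eta}{2}\|g\|^2.
\]
The extra $-\tfrac{\eta}{2}\|g\|^2$ is precisely what combines with the smoothness quadratic $\tfrac{L\eta^2}{2}\|g\|^2$ to give $\tfrac{L\eta^2-\eta}{2}\|g\|^2\le 0$ under $\eta\le 1/L$, after which that term is simply dropped. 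The remaining $\tfrac{\eta}{2}\|\nabla F-g\|^2$ is exactly the quantity you bound by $\alpha(n)$ via Lemma~1, and the $-\tfrac{\eta}{2}\|\nabla F\|^2$ with the PL inequality gives the contraction $(1-\mu\eta)$. A minor second point: in your one-step recursion the additive term should be $\alpha(n+1)$ (the drift after $n$ local steps), not $\alpha(N^{\max}-n)$; the latter indexing appears only after you unroll and re-index the sum.
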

\begin{proof}
Please refer to Appendix \ref{theorem1}.
\end{proof}

\subsection{Discussion}
Theorem 1 shows that the convergence rate $\rho$ is independent of the indicator $I_k$, which indicates that the delayed gradient of parallel computing does not affect the convergence rate. 

Supposing that the convergence condition of the global model is that $\mathbb{E}[F(\bm{W}_{\tau})] - F(\bm{w}^{\ast})$ is less than $\varepsilon$. We can theoretically derive the number of global rounds required for convergence by making the upper bound in Theorem 1 less than $\varepsilon$, i.e.,
\begin{equation}
\rho^{{N^{\max}}\tau} \mathbb{E}\big [F(\bm{W}_{0})- F(\bm{w}^{\ast}) \big ] +\hat{\alpha} \le \varepsilon.
\end{equation}
Denote $\phi=\frac{\varepsilon-\hat{\alpha}}{\mathbb{E} [F(\bm{W}_{0})- F(\bm{w}^{\ast}) ]}$, then we can deduce that
\begin{equation}
\tau \geq \frac{\log_{\rho}\phi}{N^{\max}}.
\end{equation}

\section{Predictive GAN-powered Multi-objective Optimization Algorithm}
\label{alg}

\begin{figure*}[t]
\centering
\includegraphics[width=16cm]{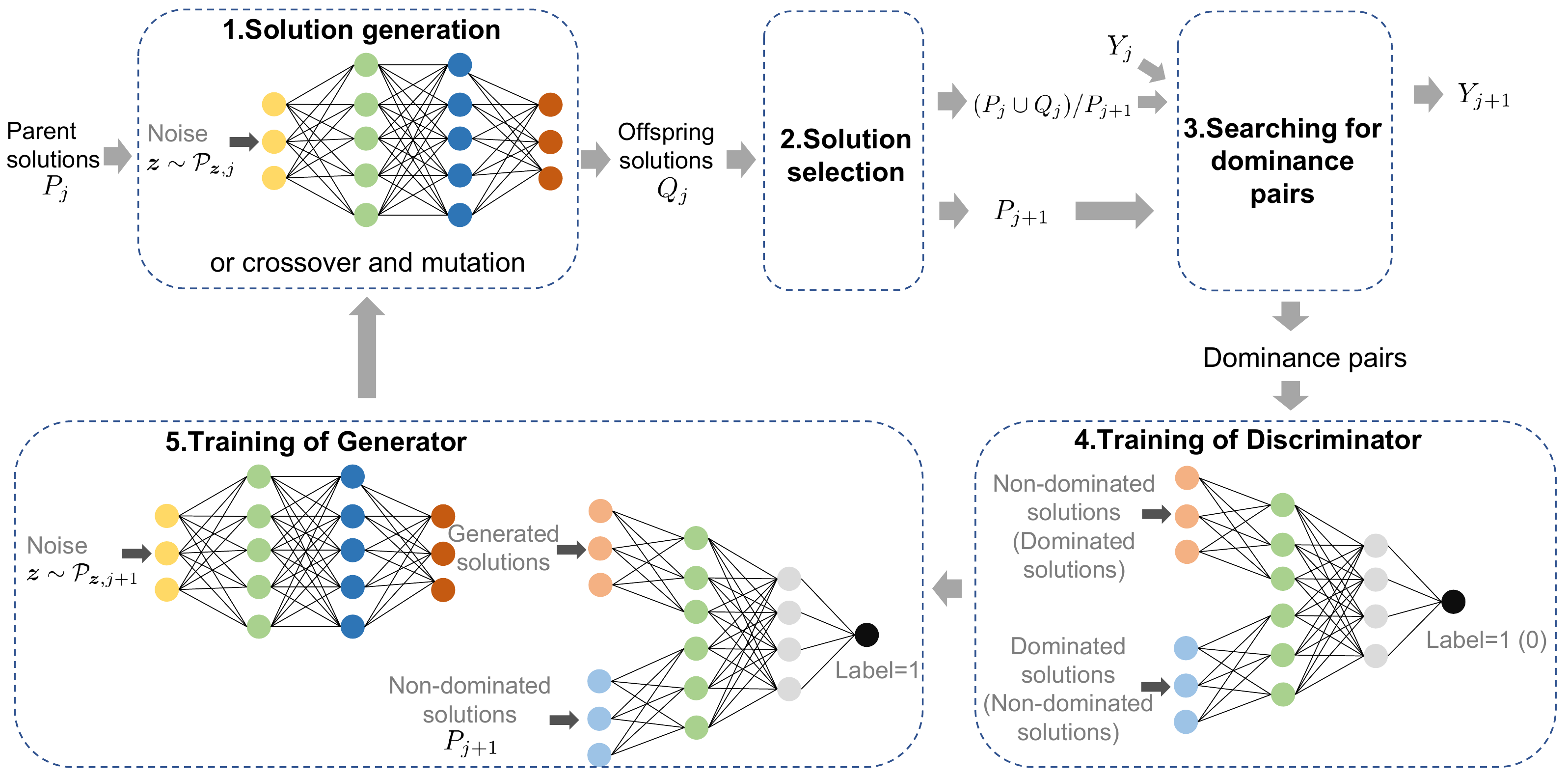}
\caption{Framework of predictive GAN-powered multi-objective optimization algorithm.}
\label{predGAN}
\end{figure*}
To obtain the Pareto front of the optimization problem, a predictive GAN-powered multi-objective optimization algorithm is proposed in this section.
\subsection{Preliminary}
\subsubsection{Generative Adversarial Network}
GAN \cite{GAN} is a powerful generative model capable of synthesizing realistic data, such as images and text. The basic architecture of GAN consists of a generator network and a discriminator network, which play a zero-sum game during training. With the generator $\mathcal{G}$, the input random noise vector $\bm{z}$ sampled from the distribution $\mathcal{P}_{\bm{z}}$ is mapped to the generated data distribution $\mathcal{P}_{\mathcal{G}(\bm{z})}$. The optimization of the generator, i.e., the mapping function, is to make the difference between the generated data distribution $\mathcal{P}_{\mathcal{G}(\bm{z})}$ and the real data distribution $\mathcal{P}_{data}$ as small as possible. Commonly used metrics to measure the difference between different distributions include KL divergence \cite{KLD}, JS divergence \cite{JSD} and Wassertein distance \cite{WaD}. The discriminator $\mathcal{D}$ can predict the probability that the input is real data, and it is used to distinguish the real data and the generated fake data. The optimization goal of the discriminator is to make the output $\mathcal{D}(\mathcal{G}(\bm{z}))$ close to 0 when the input is generated data $\mathcal{G}(\bm{z})$, and close to 1 when the input is real data. Specifically, the generator and discriminator are trained with the following min-max function
\begin{equation}
\min_{\mathcal{G}} \max_{\mathcal{D}} \mathbb{E}_{\bm{x}\sim \mathcal{P}_{data}}[\log \mathcal{D}(\bm{x})]+\mathbb{E}_{\bm{z}\sim \mathcal{P}_{\bm{z}}}[\log (1- \mathcal{D}(\mathcal{G}(\bm{z})))],
\end{equation}
where $\bm{x}\sim \mathcal{P}_{data}$ represents that $\bm{x}$ follows the distribution $\mathcal{P}_{data}$. The generator tries to fool the discriminator by maximizing $\mathcal{D}(\mathcal{G}(\bm{z}))$. After many rounds of training, the discriminator cannot distinguish the real and fake data, which indicates that the data $\mathcal{G}(\bm{z})$ generated by the generator is close to the real data. 

\subsubsection{Solution Selection of NSGA-III}
NSGA-III \cite{NSGAIII} is a classic multi-objective evolutionary algorithm (MOEA) with excellent performance. Similar to many MOEAs, NSGA-III generally has three steps: offspring generation, solution evaluation and solution selection. To generate offspring solutions, genetic operators (i.e., crossover and mutation) are used based on the parent solutions. Then the offspring solutions are evaluated with the optimized function. After evaluation, the parent and offspring solutions are combined to select new parent solutions for the next generation. These steps are repeated for many generations until the termination criterion is satisfied. To select new parent solutions, the definition of Pareto dominance is necessary.
\begin{Def}
(Pareto dominance) For any two candidate solutions $\bm{\varphi}$ and $\bm{\varphi}^{'}$ with the optimized two-objective function $\bm{V}(\bm{\varphi})$, we have
\begin{itemize}
\item $\bm{\varphi} \prec \bm{\varphi}'$ indicates that the solution $\bm{\varphi}^{'}$ is strictly dominated by $\bm{\varphi}$. Specifically, the two function values obtained by $\bm{\varphi}$ are both not greater than that of $\bm{\varphi}^{'}$, i.e., $V_1(\bm{\varphi}) \le V_1(\bm{\varphi}')$ and $V_2(\bm{\varphi}) \le V_2(\bm{\varphi}')$. Meanwhile, at least one of the two function values obtained by $\bm{\varphi}$ is strictly smaller than the corresponding value of $\bm{\varphi}'$.
\item When one element of $\bm{V}(\bm{\varphi})$ is strictly larger than that of $\bm{V}(\bm{\varphi}')$ and the other one is strictly smaller than that of $\bm{V}(\bm{\varphi}')$, the solutions $\bm{\varphi}$ and $\bm{\varphi}'$ are incomparable.
\item If there is no solution $\bm{\varphi}'$ that satisfies $\bm{\varphi}' \prec \bm{\varphi}$, the solution $\bm{\varphi}$ is Pareto optimal. The set of function values achieved by the Pareto optimal solutions is called the Pareto front.
\end{itemize}
\end{Def}

Suppose the set of parent and offspring solutions are denoted as $P_j$ and $Q_j$ at the $j$-th generation, respectively. Let $|P_j|$ denote the size of the set $P_j$, and we have $|P_j|=|Q_j|=R$. Then we need to select $R$ elites from the set $P_j \cup Q_j$ as the parent solution of the next generation, i.e., $P_{j+1}$. Specifically, the non-dominated solutions in $P_j \cup Q_j$ are selected into the first non-domination level $X_1$. In general, the $r$-th non-domination level $X_r (r>1)$ consists of the non-dominated solutions in $(P_j \cup Q_j)/ \cup_{r^{'}=1}^{r-1}X_{r'}$. This selection step is carried out sequentially from $X_1$ to $X_r$, where the size $|\cup_{r^{'}=1}^{r}X_{r'}| \geq R$ is satisfied for the first time. If $|\cup_{r^{'}=1}^{r}X_{r'}|=R$, we have $P_{j+1}=\cup_{r^{'}=1}^{r}X_{r'}$. If $|\cup_{r^{'}=1}^{r}X_{r'}|>R$, the first level to the $(r-1)$-th level are first selected to $P_{j+1}$, and then the remaining $R-|\cup_{r^{'}=1}^{r-1}X_{r'}|$ solutions are chosen from $X_r$. In the remaining selection process of NSGA-III \cite{NSGAIII}, the distance to the reference line is considered to ensure the diversity of obtained solutions. 

\begin{algorithm}[t]
\caption{Predictive GAN-powered Multi-objective Optimization Algorithm}
\label{outline}
\begin{algorithmic}[1]
\STATE Initialize the parameter of GAN
\STATE Initialize $R$ candidate solutions as the parent set $P_1$
\STATE Calculate the mean and variance of $P_1$ as $\bm{\mu}_1$ and $\bm{\sigma}_1$, respectively
\STATE Initialize $R$ candidate solutions as the set being compared $Y_1$

\FOR{$j=1$ to $J$}
\STATE Randomly sample a value $\delta$ from a uniform distribution between 0 and 1
\IF{$\delta\geq 0.5$}
\STATE Generate $R$ offspring solutions as $Q_j$ with genetic operators based on $P_j$
\ELSE
\STATE Generate $R$ offspring solutions as $Q_j$ using the generator with the mean and variance of input noise set as $\bm{\mu}_j$ and $\bm{\sigma}_j$, respectively
\ENDIF
\STATE Select $R$ solutions from $P_j \cup Q_j$ as $P_{j+1}$ using the solution selection of NSGA-III
\STATE Calculate the mean and variance of $P_{j+1}$ as $\bm{\mu}_{j+1}$ and $\bm{\sigma}_{j+1}$, respectively
\STATE Search for dominance pairs between $P_{j+1}$ and $Y_j \cup ((P_j \cup Q_j)/P_{j+1})$, and obtain $Y_{j+1}$ with Algorithm \ref{dominance_pair}
\FOR{$m=1$ to $M$}
\STATE Train the discriminator with the loss function (\ref{dis_loss})
\STATE Generate $R$ solutions using the generator with the mean and variance of input noise set as $\bm{\mu}_{j+1}$ and $\bm{\sigma}_j$, respectively
\STATE Train the generator with the loss function (\ref{gen_loss})
\ENDFOR
\ENDFOR
\end{algorithmic}
{\bf Output:} The Pareto optimal solutions and Pareto front.
\end{algorithm}

\subsection{Overall Framework}


The overall framework of the proposed algorithm is shown in Fig. \ref{predGAN}, and the proposed algorithm is outlined in Algorithm \ref{outline}. Firstly, a hybrid solution generation method is applied to generate the set of offspring solutions $Q_j$ with the parent solutions $P_{j}$. Secondly, the solution selection method of NSGA-III is used to choose $R$ solutions from $P_j \cup Q_j$ to obtain the parent solutions of the next generation $P_{j+1}$. Then dominance pairs between $P_{j+1}$ and $Y_j \cup ((P_j \cup Q_j)/P_{j+1})$ are searched with Algorithm \ref{dominance_pair}. $Y_j$ is a set to keep the worse solutions that are close to the parent solutions. Next, the discriminator is trained to learn the relationship between the dominated solutions and the dominating solutions using the dominance pairs. Finally, the generator is trained with the parameters of discriminator frozen to produce better solutions based on the dominating solutions. These steps are repeated until the termination criterion is satisfied. The details about the dominance pair and the training of GAN are introduced in the following subsections.

\subsection{Dominance Pair and Training of Discriminator}

Dominance pair is a pair of solutions $(\bm{\varphi},\bm{\varphi}')$ in the proposed algorithm, which is used to train the discriminator to learn the dominance relationship between solutions. In the dominance pair, we require that the solution $\bm{\varphi}'$ is strictly dominated by the other. We hope to learn features from the difference between the dominating solution and the dominated solution, thus guiding the generation of offspring solutions. Intuitively, when the distance between the values of the two solutions is very large, the two solutions may be very different, and the guiding effect on the direction may be very small. So we also limit the distance $\|\bm{V}(\bm{\varphi})-\bm{V}(\bm{\varphi}')\|_2$ no more than a threshold $\gamma$. In the proposed algorithm, dominance pairs are search between the $P_{j+1}$ and $Y_j \cup ((P_j \cup Q_j)/P_{j+1})$ using the above two requirements. As shown in Fig. \ref{dom_pair_fig}, let the solution corresponding to the brown point be the dominating solution $\bm{\varphi}$ of a dominance pair, then the value of the other solution is distributed in the fan-shaped area enclosed by the green dotted line. Here, the set of dominated solutions $Y_j \cup ((P_j \cup Q_j)/P_{j+1})$ is composed by half of the solutions in $P_j \cup Q_j$ with poor performance and the set being compared $Y_j$. As the solutions in $(P_j \cup Q_j)/P_{j+1}$ may be far away or very close to the solution in $P_{j+1}$, we add an extra set $Y_j$ to keep some close to solutions, improving the stability of training.

\begin{figure}[t]
\centering
\includegraphics[width=6.5cm]{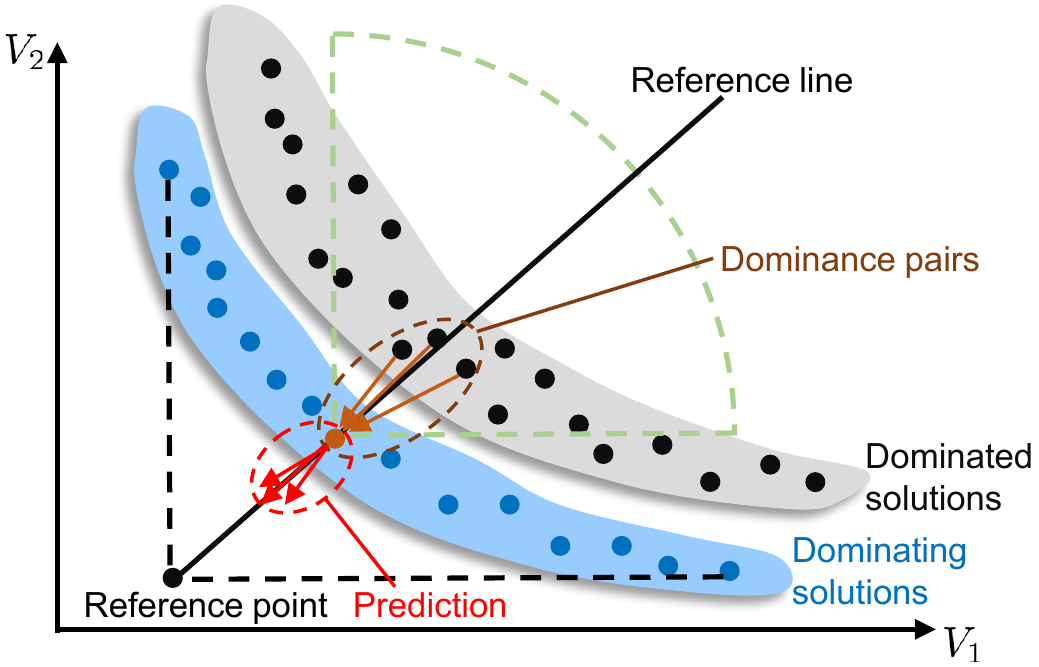}
\caption{Diagram of dominance pair.}
\label{dom_pair_fig}
\end{figure}

We can observe that there are many points in the fan-shaped area of Fig. \ref{dom_pair_fig}, which can form dominance pairs with the brown point. To ease the training burden by reducing training data, we use the reference point to avoid too many dominance pairs formed by one dominating point. Specifically, the coordinates of the reference point are the minimum values of $V_1$ and $V_2$ in all current solutions as shown in Fig. \ref{dom_pair_fig}, which is denoted by $\bm{\psi}$. The reference point is connected to the brown point to form a reference line $\Psi (\bm{\psi},\bm{V}(\bm{\varphi}))$. To reduce the number of dominance pairs corresponding to the brown point, the distance between the reference line and the value of the dominated solutions in the dominance pairs is calculated. The $\kappa$ dominated solutions with the smallest distance are retained. The details of searching and selecting dominance pairs are introduced in Algorithm \ref{dominance_pair}.

After the search and selection of dominance pairs, we can obtain a set of dominance pairs $Z$ as the training data of the discriminator. As shown in Fig. \ref{predGAN}, the discriminator has two inputs in the proposed algorithm, i.e., the dominating solution $\bm{\varphi}$ and the dominated solution $\bm{\varphi}'$ of a dominance pair $(\bm{\varphi},\bm{\varphi}')\in Z$, and they are not commutative. The output of the discriminator represents the probability that one input strictly dominates the other. Specifically, the output $\mathcal{D}(\bm{\varphi} \| \bm{\varphi}')$ is expected to be 1, while $\mathcal{D}(\bm{\varphi}' \| \bm{\varphi})$ is expected to be 0. The discriminator is trained in a supervised method, and the loss function is defined as
\begin{equation}
\label{dis_loss}
F_{\mathcal{D}}=-\frac{1}{|Z|} \sum_{(\bm{\varphi},\bm{\varphi}')\in Z} \big ( \ln(\mathcal{D}(\bm{\varphi} \| \bm{\varphi}')) +\ln (1-\mathcal{D}(\bm{\varphi}' \| \bm{\varphi})) \big ).
\end{equation}

\begin{algorithm}[t]
\caption{Searching for Dominance Pairs}
\label{dominance_pair}
{\bf Input:} The set of dominating solutions $P_{j+1}$ and the set of dominated solutions $Y_j \cup ((P_j \cup Q_j)/P_{j+1})$.
\begin{algorithmic}[1]
\STATE Initialize the set of dominance pairs $Z=\emptyset$
\STATE Initialize the set being compared in the next generation $Y_{j+1}=\emptyset$
\STATE Set the coordinates of the reference point as $\bm{\psi}=[\min_{\bm{\varphi} \in P_{j+1}} V_1(\bm{\varphi}),\min_{\bm{\varphi} \in P_{j+1}} V_2(\bm{\varphi})]$

\FOR{each solution $\bm{\varphi} \in P_{j+1}$}
\STATE Initialize the set of dominance pairs corresponding to $\bm{\varphi}$ as $Z_{\bm{\varphi}}=\emptyset$
\FOR{each solution $\bm{\varphi}' \in Y_j \cup ((P_j \cup Q_j)/P_{j+1})$}
\IF{$(\bm{\varphi} \prec \bm{\varphi}')$ \& $( \|\bm{V}(\bm{\varphi})-\bm{V}(\bm{\varphi}')\|_2 \leq \gamma )$ }
\STATE Add the dominance pair $(\bm{\varphi}, \bm{\varphi}')$ to $Z_{\bm{\varphi}}$
\ENDIF
\ENDFOR
\IF{$|Z_{\bm{\varphi}}|>\kappa$ }
\STATE Calculate the distance between the value of $\bm{\varphi}'$ of the dominance pairs in $Z_{\bm{\varphi}}$ and the reference line $\Psi (\bm{\psi},\bm{V}(\bm{\varphi}))$
\STATE Keep the $\kappa$ pairs with the smallest distance, and remove the rest from $Z_{\bm{\varphi}}$
\ENDIF
\STATE Add the dominated solution $\bm{\varphi}'$ of the dominance pairs in $Z_{\bm{\varphi}}$ to $Y_{j+1}$
\STATE Merge the set $Z_{\bm{\varphi}}$ into $Z$ 
\ENDFOR
\end{algorithmic}
{\bf Output:} The set of dominance pairs $Z$ and the set being compared in the next generation $Y_{j+1}$.
\end{algorithm}

\subsection{Training of Generator and Solution Generation}


With the trained discriminator, we can train the generator to predict solutions with better performance than the current dominating solutions as shown by the red arrow in Fig. \ref{dom_pair_fig}. As shown in Fig. \ref{predGAN}, the input of the generator is a random noise vector $\bm{z}$, which is sampled from a multivariate normal Gaussian distribution $\mathcal{P}_{\bm{z},j}$. The mean vector and covariance matrix of the Gaussian distribution are obtained from the current set of dominating solutions
\begin{align}
\bm{\mu}_j=\frac{1}{|P_j|}\sum_{\bm{\varphi} \in P_j} \bm{\varphi},\quad
\bm{\sigma}_j=\frac{1}{|P_j|}\sum_{\bm{\varphi} \in P_j} (\bm{\varphi}-\bm{\mu}_j)(\bm{\varphi}-\bm{\mu}_j)^T.
\end{align} 
This setting is helpful to generate solutions that approximate the given dominating solutions and reduce the training difficulty \cite{MOGAN}. The output of the generator is the generated solution $\mathcal{G}(\bm{z})$, which is expected to be better than the solutions in $P_j$. The loss function of the generator is
\begin{equation}
F_{\mathcal{G}}=-\frac{1}{|P_j|} \sum_{\varphi \in P_j} \ln \mathcal{D}(\mathcal{G}(\bm{z})\| \varphi ), \bm{z} \sim \mathcal{P}_{\bm{z},j},
\label{gen_loss}
\end{equation}
which is minimized by gradient descent methods. We can see that the probability $\mathcal{D}(\mathcal{G}(\bm{z})\| \varphi )$ is maximized to make the generated solution $\mathcal{G}(\bm{z})$ strictly dominate the solutions in $ P_j$. 

For the solution generation, we adopt the hybrid generation method \cite{MOGAN}. To avoid the mode collapse \cite{WGAN} in the training of GAN, genetic operations and the generator are used to generate offspring solutions with the same probability.

\section{Simulation Results}
\label{sim}
In this section, we present the detail of the simulation and evaluate the performance of the proposed HFSL based on the predictive GAN-powered multi-objective optimization algorithm.

\subsection{Simulation Setup}
\subsubsection{Simulation Environment}
A circular network with a BS at the center is considered in the simulation, serving for $K=16$ workers. The distance $d_k$ between the worker $k$ and the BS is distributed uniformly within 2 to 50 meters. The channel gain $g_{k,t}$ follows the Rayleigh distribution with the mean $10^{-PL(d_k)/20}$, where we consider the path loss $PL(d_k)(dB)=32.4+20\log_{10}(\hat{f}_k^{carrier})+20\log_{10}(d_k)$ and we use $\hat{f}_k^{carrier}=2.6$ GHz. The available bandwidth of the system is 3 MHz with the noise power spectral density -140 dBm/Hz. The transmission power of the BS and workers are set as $p_{0}=0.5~W$ and $p_k=0.05 ~W$, respectively. The maximum edge CPU frequency is $f^{E,\max}=6$ GHz with the number of FLOPs per cycle $n^E=2$. The maximum CPU frequency of workers is randomly selected from $\{ 0.8, 1, 1.2 \}$ GHz with $n_k=1$. The effective capacitance coefficient is $\epsilon_k=2\times 10^{-28}$. The number of training data of workers is randomly selected from $\{ 2400, 3200, 4000 \}$, and the batch size is $b_k=16$. The training dataset is CIFAR-10 \cite{cifar}, and the size of each image is $32\times 32 \times 3$. The number of global rounds and local epochs are $\tau=50$ and $e_k=3$, respectively. The trained neural network is MobileNetV3-Large \cite{mobilenetv3}, which is a well-performing and lightweight convolutional neural network.

\subsubsection{Hyper-parameters of the Proposed Algorithm}
Three-layer fully connected neural network (FCNN) is utilized as the generator in the proposed algorithm. The number of the input nodes, two-layer intermediate nodes, and the output nodes are all 64. The discriminator is a two-layer FCNN with 64 nodes for the two inputs and 128 and 1 node for the middle and output layers respectively. The learning rates of generator and discriminator are both set as $4\times 10^{-4}$. The number of iterations is $M=10$. The losses are optimized with the Adam optimizer \cite{adam}. The number of generated solutions is $R=100$. The distance limitation of dominance pairs is $\gamma=80$. The number of dominance pairs corresponding to one dominating solution cannot exceed $\kappa=6$. 

\subsection{Performance Comparison}
\begin{figure*}[t]
\centering
\subfigure[]{
\includegraphics[width=7.5cm]{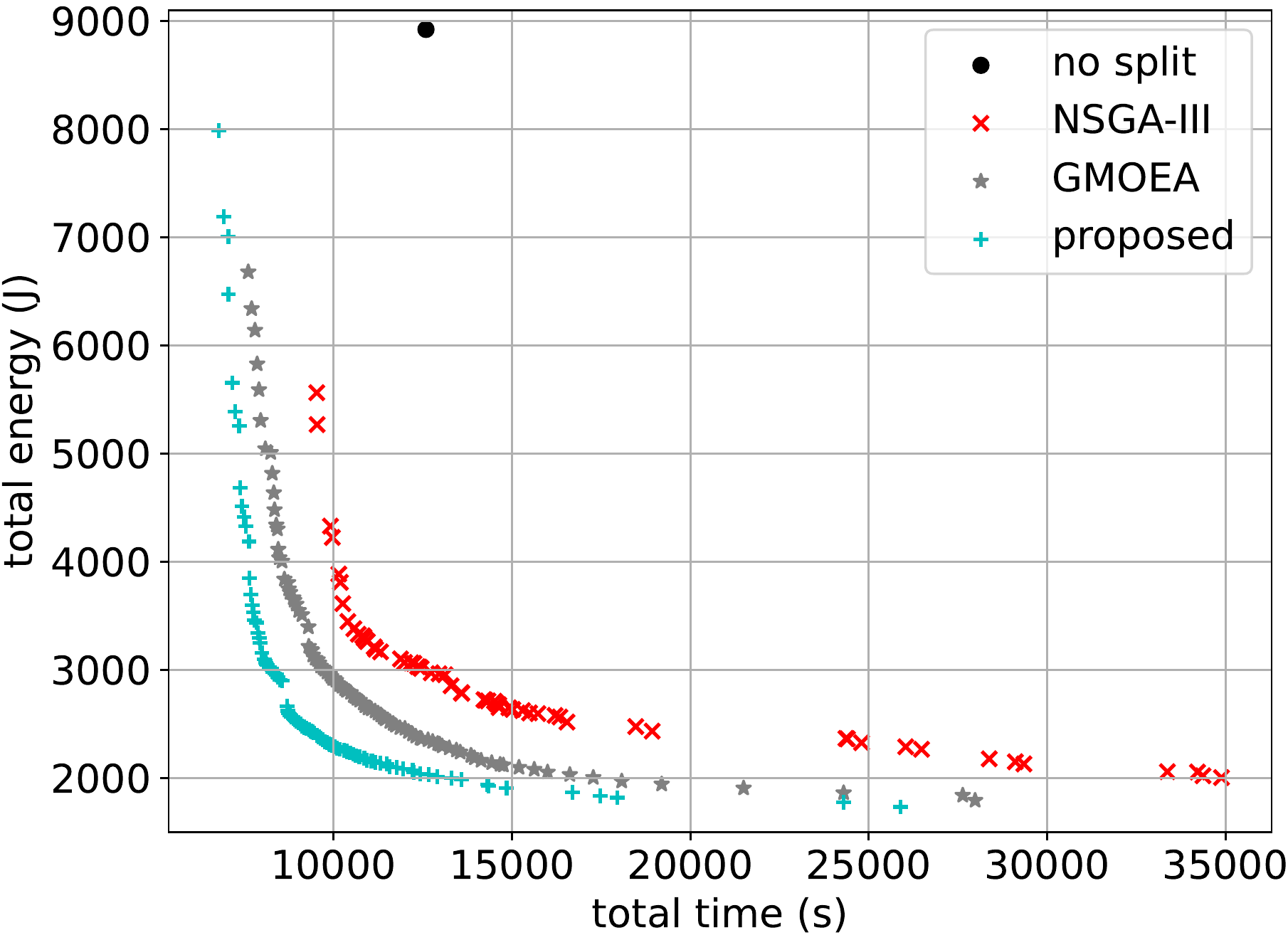}
}
\subfigure[]{
\includegraphics[width=7.5cm]{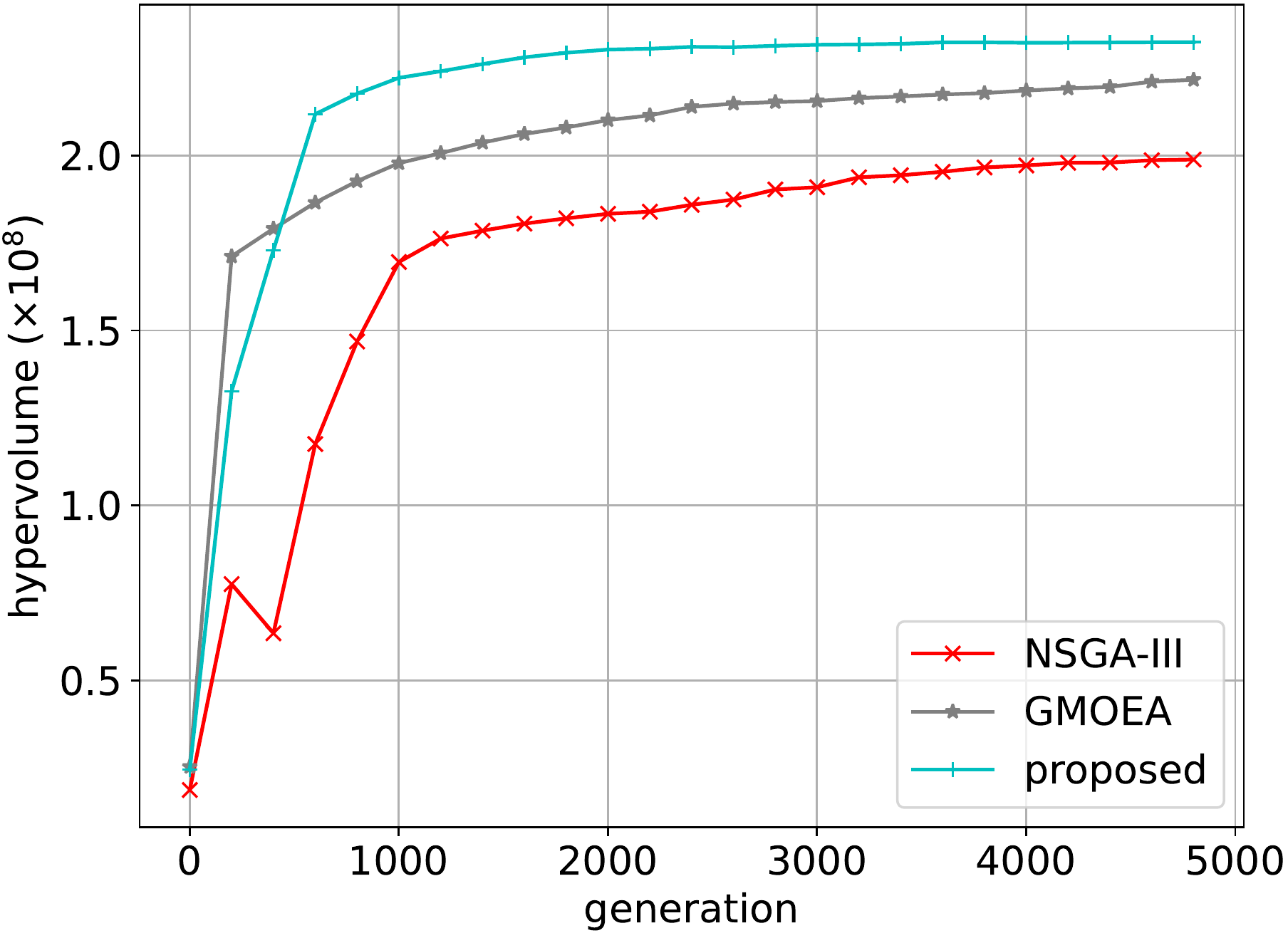}
}
\caption{(a) Comparison of Pareto fronts using different algorithms. (b) Convergence of different algorithms.}
\label{alg_comp}
\end{figure*}

\begin{figure*}[t]
\centering
\subfigure[]{
\includegraphics[width=7.5cm]{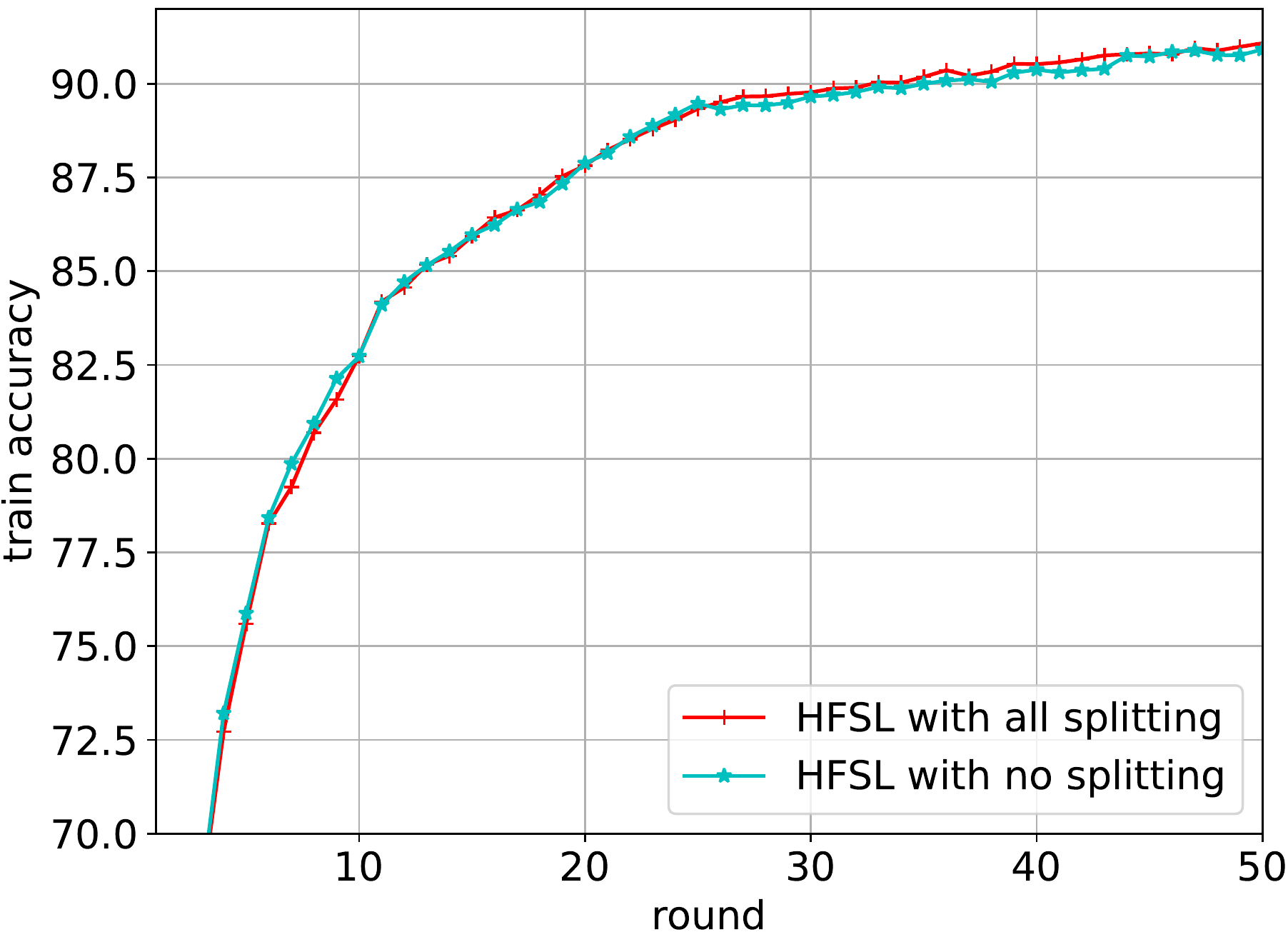}
}
\subfigure[]{
\includegraphics[width=7.5cm]{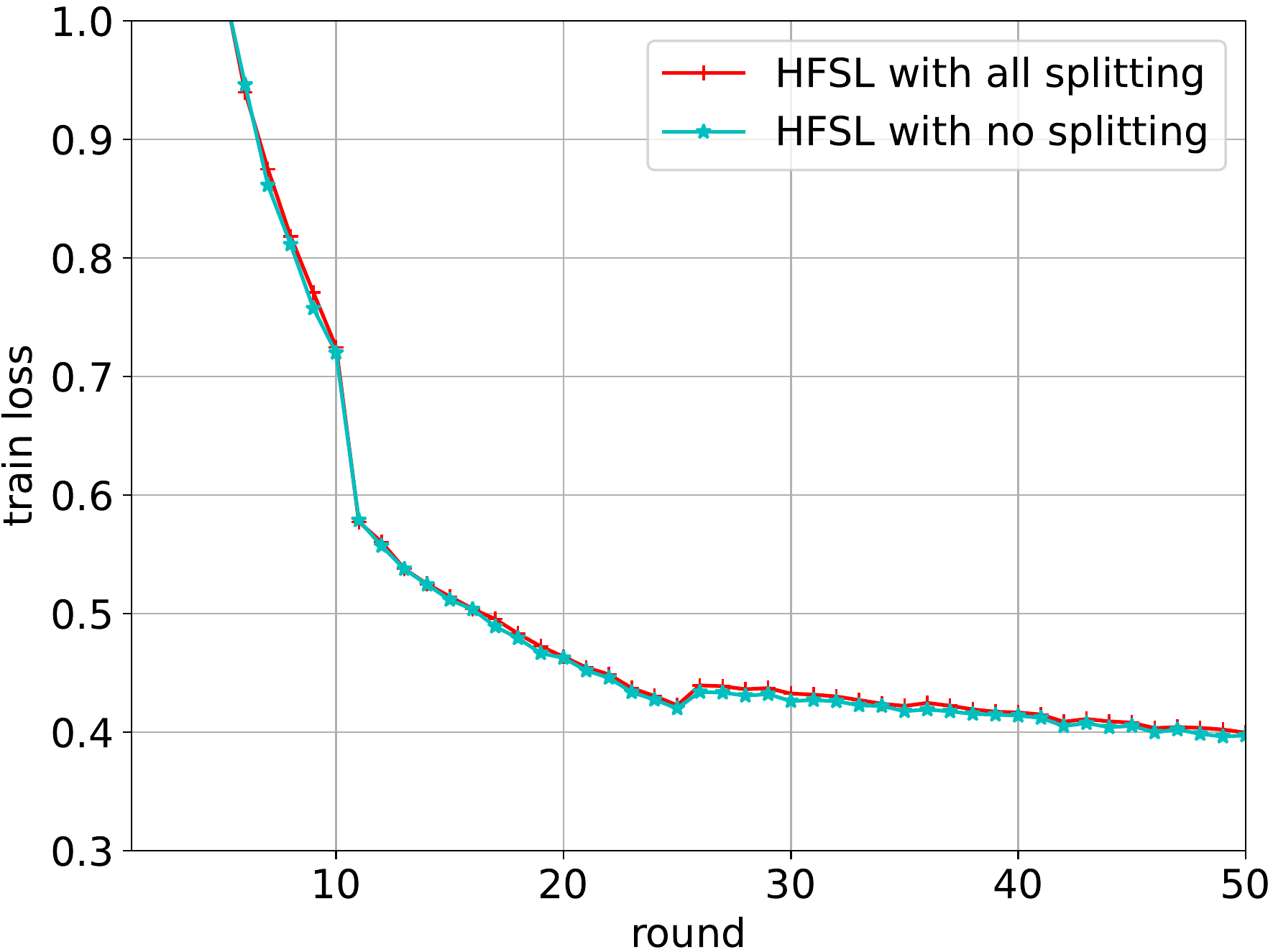}
}
\caption{(a) Train accuracy of HFSL with no model splitting and HFSL with all splitting. (b) Train loss of HFSL with no model splitting and HFSL with all splitting.}
\label{train_curve}
\end{figure*} 

To evaluate the performance of the proposed multi-objective optimization algorithm, we compare it with the NSGA-III \cite{NSGAIII} and GMOEA \cite{MOGAN}. For a fair comparison, the generator of GMOEA has the same number of nodes as the proposed algorithm. The discriminator of GMOEA is also a two-layer FCNN, and the number of the input nodes, intermediate nodes, and output nodes is 64, 64, and 1, respectively. The solution selection method of NSGA-III is applied to classify solutions in GMOEA. For the NSGA-III, the SBX \cite{deb1995simulated} and the polynomial mutation (PM) \cite{deb1996combined} are used to generate offspring solutions with the distribution index of crossover and mutation both set as 20. These three algorithms are performed for 5000 generations. 

Fig. \ref{alg_comp} (a) shows the Pareto fronts obtained by different algorithms. We can see that the performance of the Pareto front obtained by the proposed algorithm is better than the other two algorithms, which is reflected in the less energy consumption for the same training time, and less training time for the same energy consumption. Moreover, we compare the Pareto dominating solutions of the designed HFSL with the solution with no model splitting, i.e., FL. We can observe from the figure that the point without splitting is dominated by some solutions obtained by the three algorithms. 

In Fig. \ref{alg_comp} (b), we also use the hypervolume to evaluate the Pareto front \cite{while2006faster}. The hypervolume indicator refers to the area dominated by the point of Pareto front $\Omega$ and bounded above by a reference point $\bm{\omega}^{\ast}$
\begin{equation}
HV(\Omega,\bm{\omega}^{\ast})=\Lambda (\{ \bm{\omega}\in \mathbb{R}^2|\exists~ \bm{\omega}'\in \Omega : \bm{\omega}'\preceq \bm{\omega}~ and~ \bm{\omega} \preceq \bm{\omega}^{\ast} \} ),
\end{equation}
where $\Lambda (\cdot)$ is the Lebesgue measure. Note that a greater value of hypervolume indicates better performance. With the reference point $\bm{\omega}^{\ast}=[36000, 10000]$, we calculate the hypervolume of Pareto fronts during training, resulting in the convergence curves of different algorithms, as shown in Fig. \ref{alg_comp} (b). We can find that the proposed algorithm converges faster than the other algorithms and finally achieves a larger hypervolume.

The simulation runs on a Nvidia RTX 3080 GPU with Intel Core i7-11700 CPU. For the training time of these three algorithms, the time required for the proposed algorithm, GMOEA, and NSGA-III to train for 2000 generations (almost converged) is 515.7, 483.6, and 44 seconds, respectively. The time consumption of the proposed algorithm and GMOEA are approximate, and their training time is mainly spent on the training of the neural network. Although their time consumption is much larger than NSGA-III, since the training time of hybrid federated split learning system is several hours, a few minutes of optimization is acceptable.

As for the learning performance of HFSL, we compare the train accuracy and train loss of HFSL with no model splitting and HFSL with all splitting, as shown in Fig. \ref{train_curve}. The workers in HFSL with no splitting update the model with normal gradient, i.e., equation (\ref{grad}). In contrast, all workers in HFSL with all splitting update with the delayed gradient (\ref{delay_grad}). We can see from Fig. \ref{train_curve} that the train accuracy curves of HFSL with no splitting and HFSL with all splitting almost coincide, which verifies that the delayed gradient does not affect the convergence rate.

\subsection{Pareto Front versus the Bandwidth $B^{\max}$}

\begin{figure*}[t]
\centering
\subfigure[]{
\includegraphics[width=7.2cm]{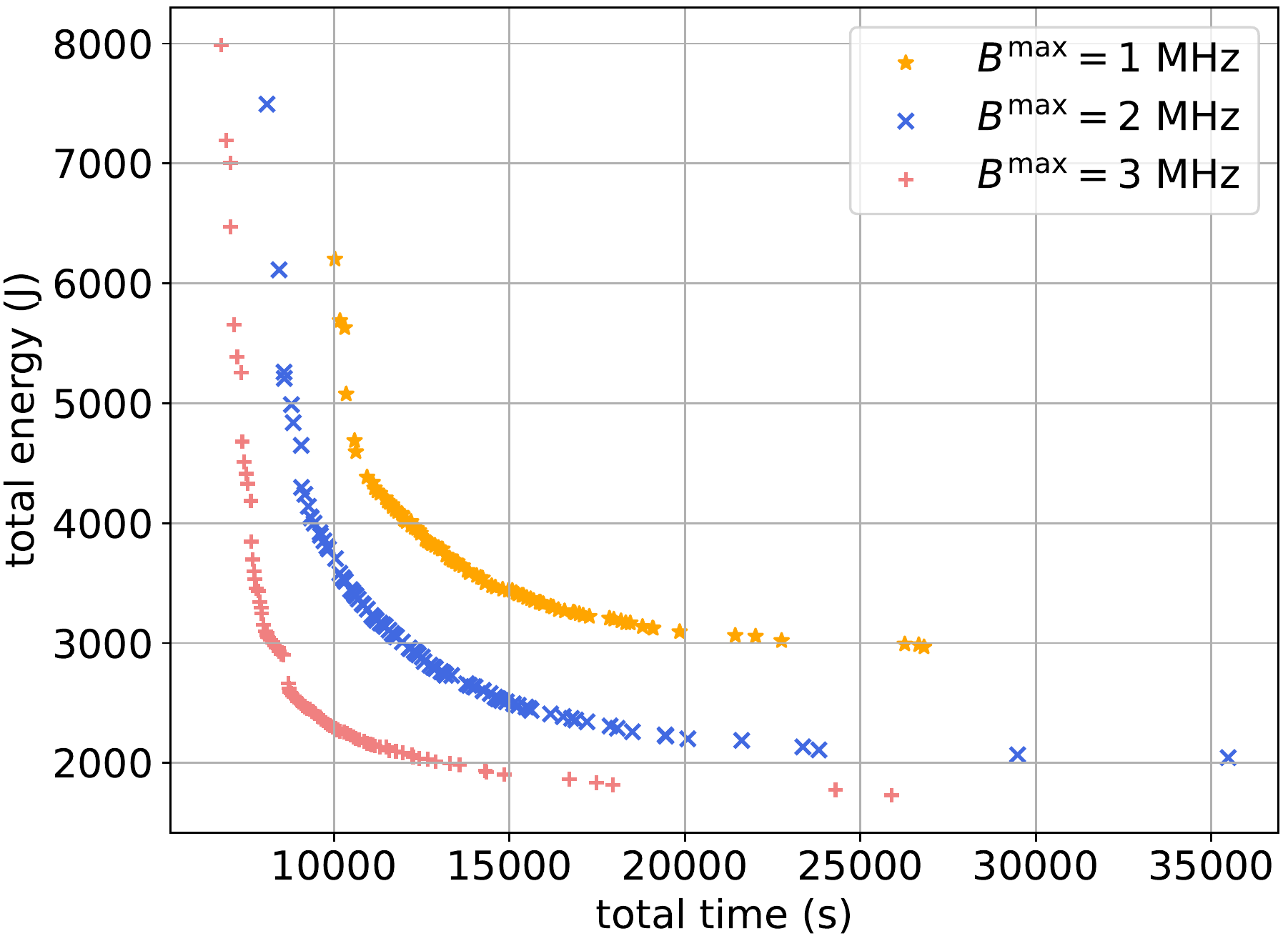}
}
\subfigure[]{
\includegraphics[width=7.4cm]{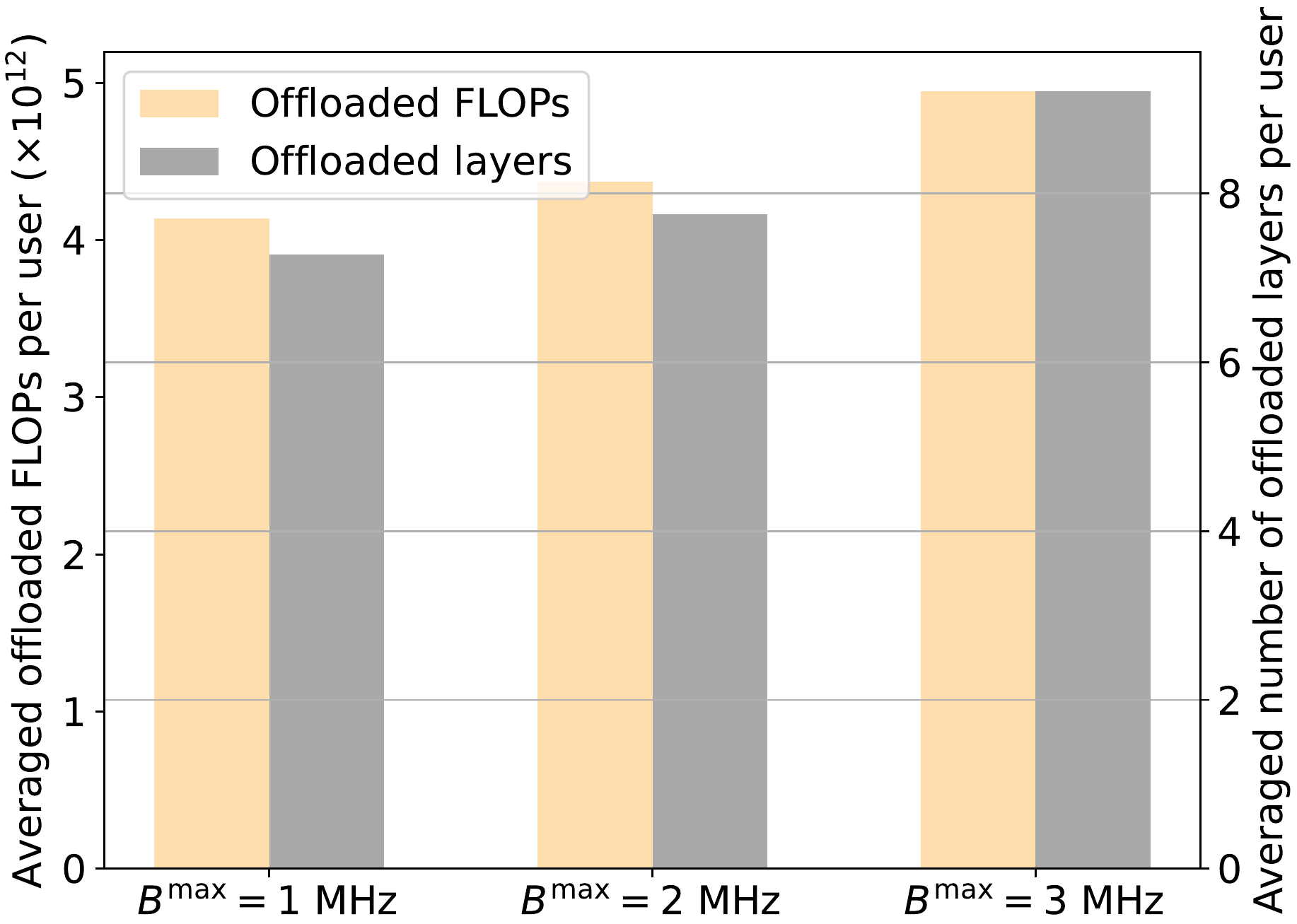}
}
\caption{(a) Pareto fronts of different bandwidth $B^{\max}$ obtained by the proposed algorithm. (b) Averaged number of FLOPs and layers offloaded to the server per user with different bandwidth $B^{\max}$.}
\label{diff_B}
\end{figure*}

\begin{figure*}[t]
\centering
\subfigure[]{
\includegraphics[width=7.2cm]{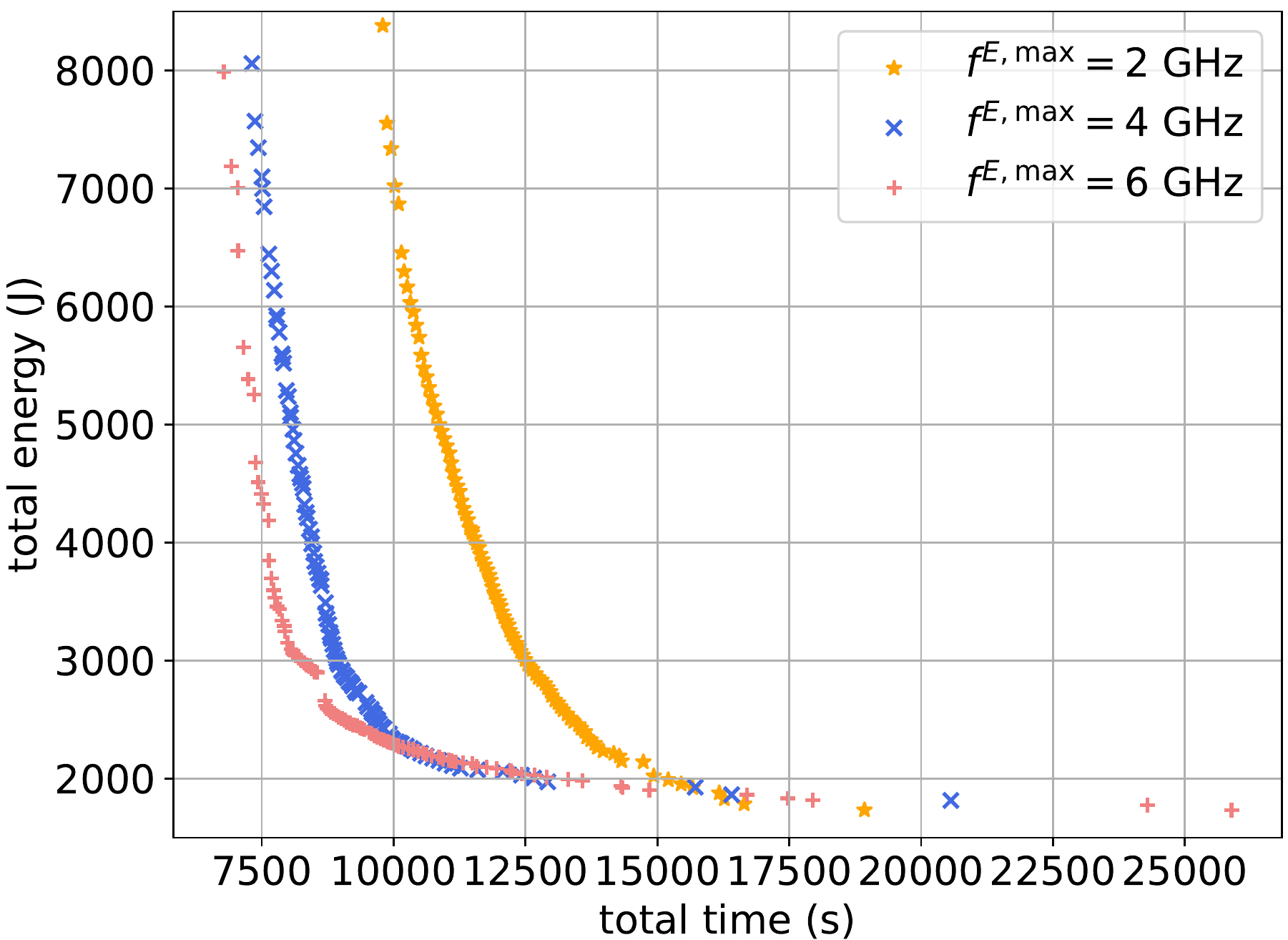}
}
\subfigure[]{
\includegraphics[width=7.4cm]{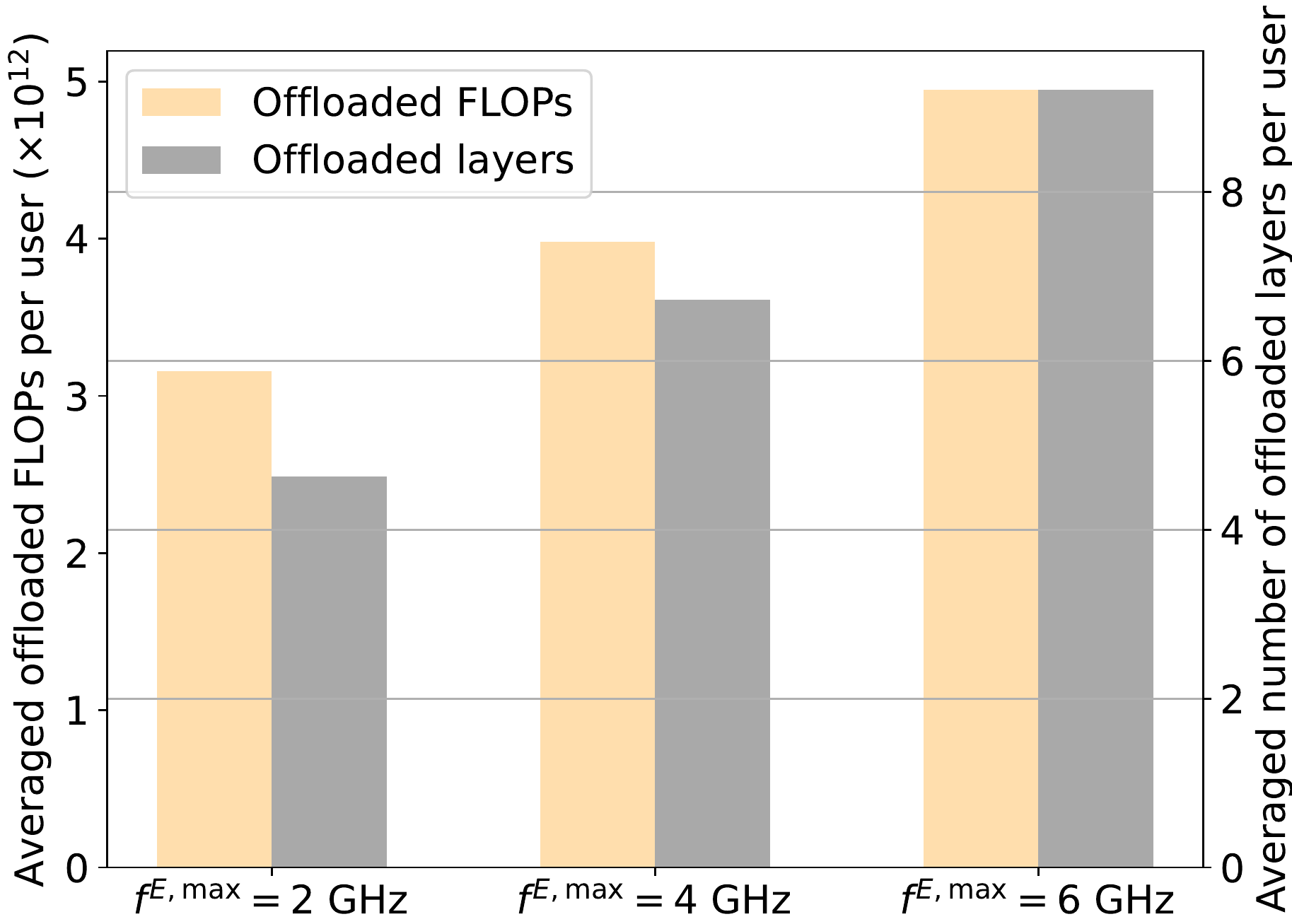}
}
\caption{(a) Pareto fronts of different CPU frequency $f^{E,\max}$ obtained by the proposed algorithm. (b) Averaged number of FLOPs and layers offloaded to the server per user with different CPU frequency $f^{E,\max}$.}
\label{diff_F}
\end{figure*} 

The impact of bandwidth $B^{\max}$ on Pareto fronts is shown in Fig. \ref{diff_B} (a) with $f^{E,\max}=6$ GHz. Increasing bandwidth can greatly reduce training time and energy consumption. On the one hand, the increase of $B^{\max}$ can reduce the transmission delay of parameters and intermediate results, while reducing the energy cost of the upload. On the other hand, with sufficient computing power of the edge server, increasing the bandwidth can increase the offloaded computing load, as shown in Fig. \ref{diff_B} (b), thereby reducing the energy consumption of workers.

\subsection{Pareto Front versus the CPU Frequency $f^{E,\max}$ of Edge Server }
The impact of computing resource $f^{E,\max}$ on Pareto fronts is shown in Fig. \ref{diff_F} (a) with $B^{\max}=3$ MHz. Increasing computing frequency of edge server can also reduce training time and energy consumption. As shown in Fig. \ref{diff_F} (b), when the bandwidth is sufficient, the number of FLOPs offloaded to the server increases with the increase of $f^{E,\max}$. Besides, the increase of $f^{E,\max}$ can reduce the computation delay of the server, thereby decreasing the training time.

\section{Conclusion}
\label{con}
In this paper, we proposed a hybrid federated split learning framework to utilize the multi-worker parallel update and low communication burden of FL and the low computational requirement for workers of SL to reduce the training time and energy consumption. To reduce the computational idleness of workers with model splitting, we designed a parallel computing scheme for model splitting without label sharing. Convergence analysis shows that the delayed gradient update introduced by the parallel computing scheme does not affect the convergence rate. Then we formulated a multi-objective optimization problem to find the Pareto-optimal solution set of splitting decisions and resource allocation for the proposed HFSL. To solve the problem, we proposed a predictive GAN-powered multi-objective optimization algorithm. Experimental results show that the proposed HFSL enables various trade-offs between training time and energy consumption, and the solutions of HFSL strictly dominate the solution of FL. Moreover, the Pareto front found by the proposed multi-objective algorithm outperforms other methods, e.g., GMOEA, and NSGA-III.

\begin{appendices}
\section{Proof of Lemma 1}
\label{lemma1}
\begin{align}
& \mathbb{E}\big[\| \overline{\bm{w}}_t^n-\bm{w}^n_{k,t}\|^2 \big ] =\mathbb{E}\big[\| \sum_{k=1}^K \frac{D_k}{D}\big ( I_k \bm{w}^n_{k,t} +(1-I_k)\hat{\bm{w}}^n_{k,t} \big ) -\bm{w}^n_{k,t}\|^2 \big ] \nonumber \\ 
&\stackrel{(a)}{=} \eta^2\mathbb{E}\big[\| \sum_{k=1}^K \frac{D_k}{D} \big ( I_k \sum_{n^{'}=1}^{n} \nabla F_k (\bm{w}^{n^{'}-1}_{k,t}) {\setlength\arraycolsep{0.3pt}+}(1-I_k)\sum_{n^{'}=1}^{n} \nabla F_k (\hat{\bm{w}}^{n^{'}-2}_{k,t}) \big ) {\setlength\arraycolsep{0.3pt}-} \sum_{n^{'}=1}^{n} \nabla F_k (\bm{w}^{n^{'}-1}_{k,t})\|^2 \big ] \nonumber \\
&\stackrel{(b)}{\leq} 2\eta^2\mathbb{E}\big[\| \sum_{n^{'}=1}^{n} \sum_{k=1}^K \frac{D_k}{D}\big ( I_k \nabla F_k (\bm{w}^{n^{'}-1}_{k,t})  +(1-I_k) \nabla F_k (\hat{\bm{w}}^{n^{'}-2}_{k,t}) \big ) \|^2 + \| \sum_{n^{'}=1}^{n} \nabla F_k (\bm{w}^{n^{'}-1}_{k,t})\|^2 \big ] \nonumber \\
&\stackrel{(c)}{\leq} 2\eta^2 n \sum_{n^{'}=1}^{n} \mathbb{E}\big[  \|  \sum_{k=1}^K \frac{D_k}{D}\big ( I_k \nabla F_k (\bm{w}^{n^{'}-1}_{k,t}) +(1-I_k) \nabla F_k (\hat{\bm{w}}^{n^{'}-2}_{k,t}) \big ) \|^2 + \| \nabla F_k (\bm{w}^{n^{'}-1}_{k,t})\|^2 \big ] \nonumber \\
&\stackrel{(d)}{\leq} 2\eta^2 n \sum_{n^{'}=1}^{n} \mathbb{E}\big[  \sum_{k=1}^K \frac{D_k}{D} \big ( I_k \| \nabla F_k (\bm{w}^{n^{'}-1}_{k,t}) \|^2  +(1-I_k) \| \nabla F_k (\hat{\bm{w}}^{n^{'}-2}_{k,t}) \|^2 \big ) + \| \nabla F_k (\bm{w}^{n^{'}-1}_{k,t})\|^2 \big ] \nonumber \\
&\stackrel{(e)}{\leq} 4\eta^2 n^2 G^2.
\end{align}
where (a) is obtained by  $\bm{w}^n_{k,t}=\bm{w}^{0}_{k,t}-\eta \sum_{n^{'}=1}^n \nabla F_k (\bm{w}^{n^{'}-1}_{k,t})$ and $\hat{\bm{w}}^n_{k,t}=\hat{\bm{w}}^{0}_{k,t}-\eta \sum_{n^{'}=1}^n \nabla F_k (\hat{\bm{w}}^{n^{'}-2}_{k,t})$. (b) and (c) are obtained by using the inequality $\|\sum_{n=1}^{N^{\max}} \bm{w}_n \|\leq {N^{\max}}\sum_{n=1}^{N^{\max}} \| \bm{w}_n \|$ for any vectors $\bm{w}_n$. (d) holds due to the convexity of $\| \cdot \|$. (e) follows from Assumption 3.

Similarly, we can obtain the following relationship
\begin{align}
& \mathbb{E}\big[\| \overline{\bm{w}}_t^n-\hat{\bm{w}}^{n-1}_{k,t} \|^2 \big ] \nonumber \\
&=\eta^2\mathbb{E}\big[\| \sum_{n^{'}=1}^{n} \sum_{k=1}^K \frac{D_k}{D}\big ( I_k \nabla F_k (\bm{w}^{n^{'}-1}_{k,t}) +(1-I_k) \nabla F_k (\hat{\bm{w}}^{n^{'}-2}_{k,t}) \big ) - \sum_{n^{'}=1}^{n-1} \nabla F_k (\hat{\bm{w}}^{n^{'}-2}_{k,t})\|^2 \big ] \nonumber \\
&\leq 2\eta^2\mathbb{E}\big[\| \sum_{n^{'}=1}^{n} \sum_{k=1}^K \frac{D_k}{D}\big ( I_k \nabla F_k (\bm{w}^{n^{'}-1}_{k,t}) +(1-I_k) \nabla F_k (\hat{\bm{w}}^{n^{'}-2}_{k,t}) \big ) \|^2 + \| \sum_{n^{'}=1}^{n-1} \nabla F_k (\hat{\bm{w}}^{n^{'}-2}_{k,t}) \big ] \nonumber \\
&\leq 2\eta^2  \mathbb{E}\big[ n \sum_{n^{'}=1}^{n} \|  \sum_{k=1}^K \frac{D_k}{D}\big ( I_k \nabla F_k (\bm{w}^{n^{'}-1}_{k,t}) {\setlength\arraycolsep{0.3pt}+}(1-I_k) \nabla F_k (\hat{\bm{w}}^{n^{'}-2}_{k,t}) \big ) \|^2 {\setlength\arraycolsep{0.3pt}+} (n-1) \sum_{n^{'}=1}^{n-1} \| \nabla F_k (\hat{\bm{w}}^{n^{'}-2}_{k,t})\|^2 \big ] \nonumber \\
&\leq 2\eta^2  \mathbb{E}\big[ n \sum_{n^{'}=1}^{n}   \sum_{k=1}^K \frac{D_k}{D}\big ( I_k \| \nabla F_k (\bm{w}^{n^{'}-1}_{k,t})\|^2 {\setlength\arraycolsep{0.3pt}+}(1{\setlength\arraycolsep{0.3pt}-}I_k) \| \nabla F_k (\hat{\bm{w}}^{n^{'}-2}_{k,t}) \|^2 \big ) {\setlength\arraycolsep{0.3pt}+} (n{\setlength\arraycolsep{0.3pt}-}1) \sum_{n^{'}=1}^{n-1} \| \nabla F_k (\hat{\bm{w}}^{n^{'}-2}_{k,t})\|^2 \big ] \nonumber \\
&\leq 2\eta^2 (n^2 +(n-1)^2 )G^2.
\end{align}

\section{Proof of Theorem 1}
\label{theorem1}
Based on the $L$-smoothness of $F_k(\cdot)$, it can be proved that $F(\cdot)$ is also $L$-smooth. So we have
\begin{align}
\mathbb{E}[F(\overline{\bm{w}}_t^n)] \leq \mathbb{E}\big [F(\overline{\bm{w}}_t^{n-1})+& \langle \nabla F(\overline{\bm{w}}_t^{n-1}), \overline{\bm{w}}_t^n-\overline{\bm{w}}_t^{n-1} \rangle +\frac{L}{2} \| \overline{\bm{w}}_t^n-\overline{\bm{w}}_t^{n-1}\|^{2} \big ].
\end{align}
For the last term on the right-hand side of the above inequality, we have
\begin{align}
 \mathbb{E}\big [\| \overline{\bm{w}}_t^n-\overline{\bm{w}}_t^{n-1}\|^{2} \big ] &= \mathbb{E}\big [\| \sum_{k=1}^K \frac{D_k}{D}\big (I_k (\bm{w}^n_{k,t}{\setlength\arraycolsep{0.3pt}-}\bm{w}^{n-1}_{k,t}) {\setlength\arraycolsep{0.3pt}+}(1{\setlength\arraycolsep{0.3pt}-} I_k)(\hat{\bm{w}}^n_{k,t}{\setlength\arraycolsep{0.3pt}-}\hat{\bm{w}}^{n-1}_{k,t}) \big ) \|^{2} \big ] \nonumber \\
&=\eta^2 \mathbb{E}\big [\| \sum_{k=1}^K \frac{D_k}{D}\big (I_k \nabla F_k (\bm{w}^{n-1}_{k,t}) {\setlength\arraycolsep{0.3pt}+}(1 {\setlength\arraycolsep{0.3pt}-} I_k)\nabla F_k (\hat{\bm{w}}^{n-2}_{k,t}) \big ) \|^{2} \big ].
\end{align}

Moreover,
\begin{align}
&\mathbb{E}\big[\langle \nabla F(\overline{\bm{w}}_t^{n-1}), \overline{\bm{w}}_t^n-\overline{\bm{w}}_t^{n-1} \rangle \big ] \nonumber \\
&= -\eta \mathbb{E}\big[\langle \nabla F(\overline{\bm{w}}_t^{n-1}), \sum_{k=1}^K \frac{D_k}{D}\big ( I_k \nabla F_k (\bm{w}^{n-1}_{k,t}) +(1-I_k)\nabla F_k (\hat{\bm{w}}^{n-2}_{k,t}) \big ) \rangle \big ] \nonumber \\
&\stackrel{(a)}{=} \frac{\eta}{2} \mathbb{E}\big[\| \nabla F(\overline{\bm{w}}_t^{n-1}) - \sum_{k=1}^K \frac{D_k}{D}\big ( I_k \nabla F_k (\bm{w}^{n-1}_{k,t}) +(1-I_k)\nabla F_k (\hat{\bm{w}}^{n-2}_{k,t}) \big ) \|^2 \big ]  - \frac{\eta}{2} \mathbb{E}\big[\| \nabla F(\overline{\bm{w}}_t^{n-1}) \|^2 \big ] \nonumber \\
& \quad{\setlength\arraycolsep{0.3pt}-}\frac{\eta}{2} \mathbb{E}\big[\| \sum_{k=1}^K \frac{D_k}{D}\big ( I_k \nabla F_k (\bm{w}^{n-1}_{k,t}) {\setlength\arraycolsep{0.3pt}+}(1{\setlength\arraycolsep{0.3pt}-}I_k)\nabla F_k (\hat{\bm{w}}^{n-2}_{k,t}) \big ) \|^2 \big ],
\label{innermul}
\end{align}
where (a) follows from the fact that $\langle \bm{w}_1,\bm{w}_2 \rangle=\frac{1}{2}(\| \bm{w}_1 \|^2+\| \bm{w}_2 \|^2-\| \bm{w}_1-\bm{w}_2 \|^2)$ for any two vectors $\bm{w}_1$ and $\bm{w}_2$ of the same size.

For the first term in equation (\ref{innermul}), we have
\begin{align}
& \mathbb{E}\big[\| \nabla F(\overline{\bm{w}}_t^{n-1}) - \sum_{k=1}^K \frac{D_k}{D}\big ( I_k \nabla F_k (\bm{w}^{n-1}_{k,t}) +(1-I_k)\nabla F_k (\hat{\bm{w}}^{n-2}_{k,t}) \big ) \|^2 \big ] \nonumber \\
&= \mathbb{E}\big[\| \sum_{k=1}^K \frac{D_k}{D}\big (I_k (\nabla F_k(\overline{\bm{w}}_t^{n-1})-\nabla F_k(\bm{w}^{n-1}_{k,t})) +(1-I_k)(\nabla F_k(\overline{\bm{w}}_t^{n-1})-\nabla F_k (\hat{\bm{w}}^{n-2}_{k,t})) \big ) \|^2 \big ] \nonumber \\
&\stackrel{(a)}{\leq} \sum_{k=1}^K \frac{D_k}{D} \Big ( I_k \mathbb{E}\big[\| \nabla F_k(\overline{\bm{w}}_t^{n-1})-\nabla F_k(\bm{w}^{n-1}_{k,t}) \|^2 \big ] +(1-I_k)\mathbb{E}\big[\| \nabla F_k(\overline{\bm{w}}_t^{n-1})-\nabla F_k (\hat{\bm{w}}^{n-2}_{k,t})  \|^2 \big ] \Big ) \nonumber \\
&\stackrel{(b)}{\leq} L^2 \sum_{k=1}^K \frac{D_k}{D} \Big ( I_k \mathbb{E}\big[\|  \overline{\bm{w}}_t^{n-1}-\bm{w}^{n-1}_{k,t}\|^2 \big ] +(1-I_k)\mathbb{E}\big[\| \overline{\bm{w}}_t^{n-1}-\hat{\bm{w}}^{n-2}_{k,t} \|^2 \big ] \Big )\nonumber \\
&\stackrel{(c)}{\leq} 2 \eta^2 G^2 L^2 \sum_{k=1}^K \frac{D_k}{D}\big ( 2 I_k (n{\setlength\arraycolsep{0.3pt}-}1)^2 {\setlength\arraycolsep{0.3pt}+}(1{\setlength\arraycolsep{0.3pt}-}I_k)((n{\setlength\arraycolsep{0.3pt}-}1)^2{\setlength\arraycolsep{0.3pt}+}(n{\setlength\arraycolsep{0.3pt}-}2)^2) \big ),
\end{align}
where (a) is obtained by Jensen's inequality. (b) holds because $L$-smoothness of $F_k(\cdot)$ can also be expressed as $\| \nabla F_k(\bm{w}_2)-\nabla F_k(\bm{w}_1) \| \leq L \| \bm{w}_2-\bm{w}_1 \|, ~\forall \bm{w}_1,\bm{w}_2$. (c) follows from Lemma 1. 

Combining the above inequalities together, we have
\begin{align}
\mathbb{E}[F(\overline{\bm{w}}_t^n)] \leq \mathbb{E}&\big [F(\overline{\bm{w}}_t^{n-1})- \frac{\eta}{2} \| \nabla F(\overline{\bm{w}}_t^{n-1}) \|^2+ \alpha(n) \nonumber \\ 
&{\setlength\arraycolsep{0.3pt}+}\frac{L\eta^2{\setlength\arraycolsep{0.3pt}-}\eta}{2} \sum_{k=1}^K \frac{D_k}{D}\big (I_k \nabla F_k (\bm{w}^{n-1}_{k,t}) {\setlength\arraycolsep{0.3pt}+}(1{\setlength\arraycolsep{0.3pt}-}I_k)\nabla F_k (\hat{\bm{w}}^{n-2}_{k,t}) \big ) \|^{2} \big ] ,
\label{bound1}
\end{align}
where $\alpha(n)= \eta^3 G^2 L^2 \sum_{k=1}^K \frac{D_k}{D}\big ( 2 I_k (n-1)^2 +(1-I_k)((n-1)^2+(n-2)^2) \big )$.

By minimizing both sides of the inequality (\ref{convex}) with respect to $\bm{w}_2$, we can obtain that $ F(\bm{w}^{\ast})\geq F(\bm{w}_1)-\frac{1}{2\mu}\| \nabla F(\bm{w}_1) \|^2$ for any  $\bm{w}_1$. So we have 
\begin{equation}
\| \nabla F(\overline{\bm{w}}_t^{n-1}) \|^2 \geq 2\mu \big ( F(\overline{\bm{w}}_t^{n-1})- F(\bm{w}^{\ast})\big ).
\label{mu_convex}
\end{equation}
By setting the learning rate $\eta \leq \frac{1}{L}$ and substituting the inequality (\ref{mu_convex}) into (\ref{bound1}), we have
\begin{align}
\mathbb{E}[F(\overline{\bm{w}}_t^n)] {\setlength\arraycolsep{0.3pt}-} F(\bm{w}^{\ast}) &\leq (1{\setlength\arraycolsep{0.3pt}-}\mu \eta)\mathbb{E}\big [F(\overline{\bm{w}}_t^{n-1}){\setlength\arraycolsep{0.3pt}-} F(\bm{w}^{\ast}) \big ] {\setlength\arraycolsep{0.3pt}+} \alpha (n).
\label{bound2}
\end{align}
By using the inequality (\ref{bound2}) recursively, we have 
\begin{align}
&\mathbb{E}[F(\bm{W}_t)] - F(\bm{w}^{\ast})=\mathbb{E}[F(\overline{\bm{w}}_t^n)] - F(\bm{w}^{\ast}) \nonumber \\
&\leq (1-\mu \eta)^{N^{\max}} \mathbb{E}\big [F(\overline{\bm{w}}_t^0)- F(\bm{w}^{\ast}) \big ] + \sum_{n=0}^{{N^{\max}}-1}(1-\nu \eta)^{n} \alpha ({N^{\max}}-n) \nonumber \\
&= (1-\nu \eta)^{N^{\max}} \mathbb{E}\big [F(\bm{W}_{t-1})- F(\bm{w}^{\ast}) \big ] + \sum_{n=0}^{{N^{\max}}-1}(1-\nu \eta)^{n} \alpha ({N^{\max}}-n).
\label{bound3}
\end{align}
By using the inequality (\ref{bound3}) recursively, we have
\begin{align}
\mathbb{E}[F(\bm{W}_{\tau})] - F(\bm{w}^{\ast}) \leq  (1-\nu \eta)^{{N^{\max}}\tau}& \mathbb{E}\big [F(\bm{W}_{0})- F(\bm{w}^{\ast}) \big ] \nonumber\\
&+ \sum_{t=0}^{\tau-1} (1-\nu \eta)^{N^{\max}t} \sum_{n=0}^{{N^{\max}}-1}(1-\nu \eta)^{n} \alpha ({N^{\max}}-n).
\end{align}

\end{appendices}

\bibliography{ref}

\end{document}